\documentclass[sigconf,pbalance,nonacm]{acmart}

\usepackage{booktabs}
\usepackage{colortbl}
\usepackage{multirow}
\usepackage{amsmath}
\usepackage{array}
\usepackage{algorithm}
\usepackage{algorithmic}
\usepackage{enumitem}
\usepackage{xspace}
\usepackage{subcaption}
\usepackage{afterpage}
\usepackage[capitalise]{cleveref}

\keywords{diffusion transformers, training-free acceleration, feature prediction, linear multistep methods, image generation, video generation}

\begin{document} 

\title[PrediT]{Predict to Skip: Linear Multistep Feature Forecasting for Efficient Diffusion Transformers}

\author{\fontsize{9}{10}\selectfont Hanshuai Cui$^{2,1}$, Zhiqing Tang$^{1}$, Qianli Ma$^{1}$, Zhi Yao$^{2,1}$, Weijia Jia$^{1}$, Wei Zhao$^{3}$}
\affiliation{%
  \institution{$^{1}$Institute of Artificial Intelligence and Future Networks, Beijing Normal University\city{Zhuhai}\country{China}}
}
\affiliation{%
  \institution{$^{2}$School of Artificial Intelligence, Beijing Normal University\city{Beijing}\country{China}}
}
\affiliation{%
  \institution{$^{3}$Shenzhen University of Advanced Technology\city{Shenzhen}\country{China}}
}
\renewcommand{\shortauthors}{Cui et al.}

\begin{abstract}

Diffusion Transformers (DiT) have emerged as a widely adopted backbone for high-fidelity image and video generation, yet their iterative denoising process incurs high computational costs. Existing training-free acceleration methods rely on feature caching and reuse under the assumption of temporal stability. However, reusing features for multiple steps may lead to latent drift and visual degradation. As recognized in recent work, model outputs evolve smoothly along much of the diffusion trajectory. We leverage this property to enable principled predictions rather than naive reuse. Based on this insight, we propose PrediT, a training-free acceleration framework that formulates feature prediction as a linear multistep problem. We employ classical linear multistep methods to forecast future model outputs from historical information, combined with a corrector that activates in high-dynamics regions to prevent error accumulation. A dynamic step modulation mechanism adaptively adjusts the prediction horizon by monitoring the feature change rate. Together, these components enable substantial acceleration while preserving generation fidelity. Extensive experiments validate that our method achieves up to 5.54x latency reduction across various DiT-based image and video generation models, while incurring negligible quality degradation. Our code is publicly available at \url{https://github.com/hsc113/PrediT}.
\end{abstract}

\maketitle

\section{Introduction}

\begin{figure}[t]
 \centering
 \includegraphics[width=\linewidth]{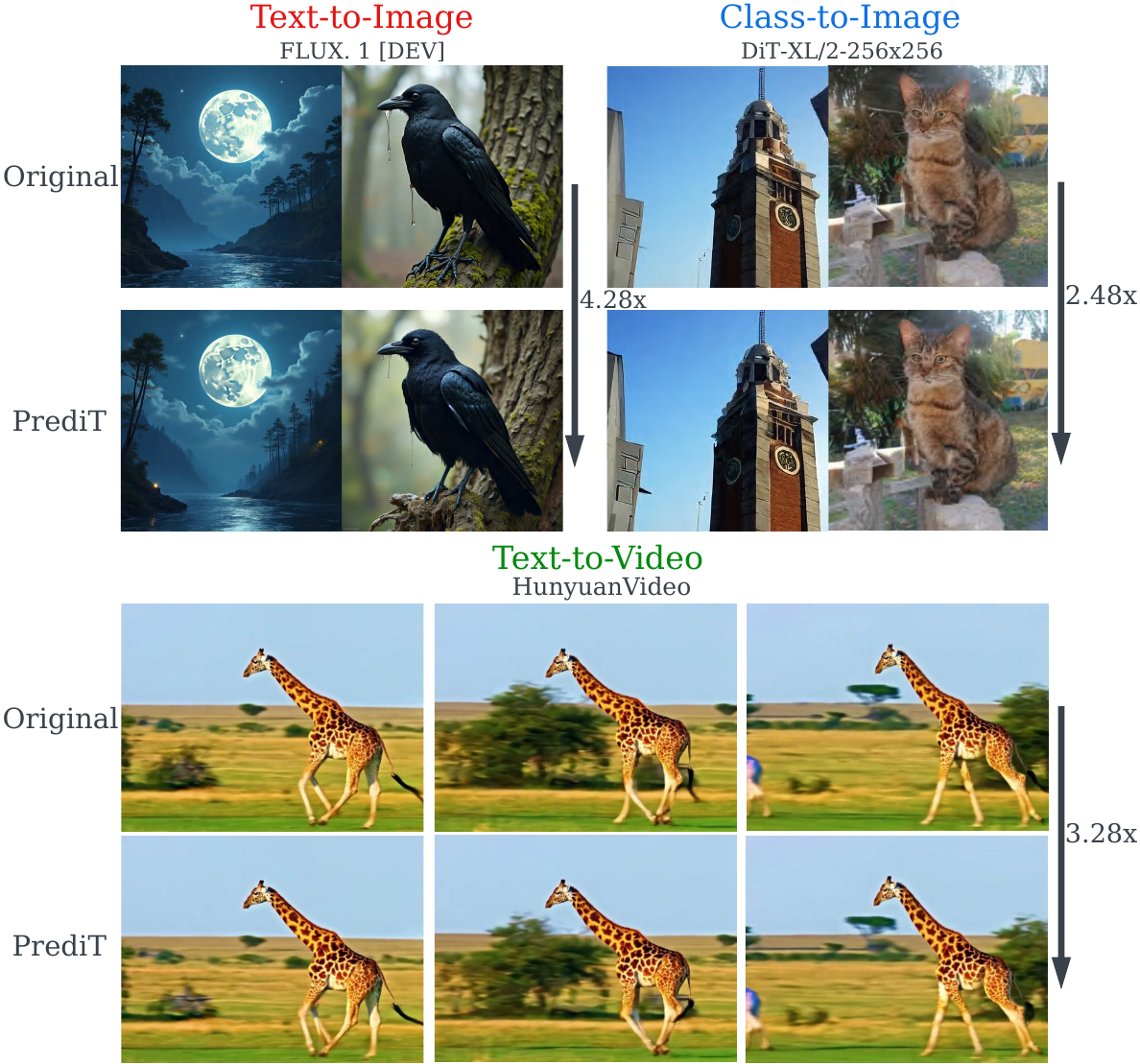}
 \caption{Accelerating Diffusion Transformer inference across multiple models. PrediT achieves significant speedup on various DiT-based architectures for both image and video generation while preserving visual quality.}
 \Description{A teaser montage of image and video generation examples from several Diffusion Transformer models, illustrating that PrediT reduces inference cost while preserving visual quality.}
 \label{fig:teaser}
\end{figure}

Generative AI has achieved remarkable success across a wide range of applications, such as image and video generation, and controllable content creation~\cite{fui2023generative,cao2025survey}. Diffusion models (DMs)~\cite{dhariwal2021diffusion,ho2020denoising,rombach2022high} generate data by learning to reverse a gradual noising process, becoming the foundation of these advances. Early DMs relied on U-Net~\cite{blattmann2023stable,ramesh2022hierarchical} architectures, but the limited receptive field of convolutional networks makes it difficult to capture long-range dependencies, restricting generation quality for complex scenes. Diffusion Transformers (DiT)~\cite{peebles2023scalable} have been proposed to address this limitation, leveraging the self-attention mechanism to model global relationships across the entire latent space. DiT models are now a widely adopted backbone behind state-of-the-art models~\cite{opensora,chen2023pixart,chen2024pixart,liu2024sora,esser2024scaling}.
However, DiT inference is computationally expensive, as the quadratic attention cost combined with iterative denoising results in significant latency, making real-time generation impractical.

To accelerate DiT inference, two main categories of methods have been explored. Training-based approaches, such as distillation~\cite{salimans2022progressive,meng2023distillation} and quantization~\cite{shang2023post,li2023q}, reduce model complexity but require substantial training data and compute, and may degrade generation quality due to distribution shift. Training-free methods are increasingly popular due to their plug-and-play nature. Among these, caching-based approaches exploit the temporal redundancy in diffusion sampling by caching intermediate features and reusing them across consecutive steps, thereby reducing the number of full model samplings~\cite{liu2025survey}.

Despite their efficiency, existing caching methods face limitations. Reuse-based methods such as DeepCache~\cite{ma2024deepcache}, FORA~\cite{selvaraju2024fora}, and $\Delta$-DiT~\cite{chen2024delta} directly substitute previous model outputs for current ones, assuming that features remain unchanged across steps. However, this assumption breaks down in regions of high-dynamics along the denoising trajectory, leading to latent drift and potentially causing visual artifacts. To maintain acceptable quality, these methods must limit their acceleration ratio, resulting in modest speedups. More advanced approaches such as AB-Cache~\cite{yu2025ab} and TaylorSeer~\cite{liu2025reusing} extrapolate future features from historical information. While these methods improve upon naive reuse, they still suffer from error accumulation under fixed skip intervals, especially when the model output exhibits non-uniform dynamics.

Building on the observation from recent work~\cite{yu2025ab,liu2025reusing} that the model's feature trajectory along the diffusion process is locally smooth (see \cref{fig:pca}), we argue that this temporal structure can be more effectively exploited beyond naive reuse. To enable high-quality generation with aggressive acceleration, we identify two key challenges that must be addressed: \textbf{(1) How to predict accurately and stably?} Naive reuse is a zero-order approximation, while existing forecasting methods rely on finite-difference estimation that is sensitive to noise and unstable in higher-order terms, causing error accumulation under large skip intervals. \textbf{(2) How to adapt the prediction horizon dynamically?} The feature dynamics vary significantly throughout the diffusion trajectory, with the initial and final phases exhibiting rapid changes while middle timesteps are smooth. A fixed skip schedule cannot account for this variation, leading to over-skipping (quality loss) or under-skipping (limited speedup).

In this paper, we propose \textbf{PrediT} (\textbf{Pre}dictive \textbf{DiT}), a training-free framework that treats feature estimation as a linear multistep prediction problem. We employ the Adams-Bashforth predictor, which directly combines historical function values without explicit derivative estimation, to extrapolate future model outputs with improved numerical stability. An Adams-Moulton corrector is further applied in high-dynamics regions to refine predictions and prevent error accumulation. A dynamic step modulation mechanism monitors the relative change rate and adaptively adjusts how many steps can be safely skipped. In contrast to prior methods that either passively reuse stale features or apply fixed-interval extrapolation without adaptive control, PrediT allows principled prediction with adaptive step modulation, achieving aggressive acceleration without quality degradation. As shown in \cref{fig:teaser}, PrediT achieves $4.28\times$ speedup on FLUX~\cite{flux2024}, $2.48\times$ on DiT-XL/2~\cite{peebles2023scalable}, and $3.28\times$ on HunyuanVideo~\cite{kong2024hunyuanvideo} while maintaining comparable visual fidelity. In summary, our main contributions are:
\begin{itemize}[itemsep=2pt, topsep=2pt, parsep=0pt]
 \item We analyze why naive feature reuse leads to latent drift and show that diffusion trajectories are locally smooth, motivating higher-order polynomial prediction.
 \item We propose PrediT, a training-free framework employing the Adams-Bashforth predictor with Adams-Moulton correction for stable feature forecasting. A dynamic step modulation mechanism monitors the relative feature change rate and adaptively adjusts the prediction interval.
 \item Extensive experiments on several DiT-based models demonstrate that PrediT achieves up to $5.54\times$ speedup while maintaining comparable generation quality, outperforming existing caching methods.
\end{itemize}

\begin{figure*}[t]
 \centering
 \begin{subfigure}[b]{0.29\textwidth}
   \centering
   \includegraphics[width=\linewidth]{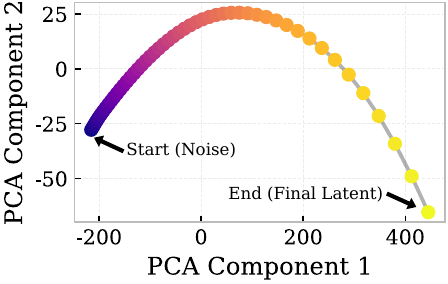}
   \caption{Feature Trajectory}
   \label{fig:pca}
 \end{subfigure}
 \hfill
 \begin{subfigure}[b]{0.705\textwidth}
   \centering
   \includegraphics[width=\linewidth]{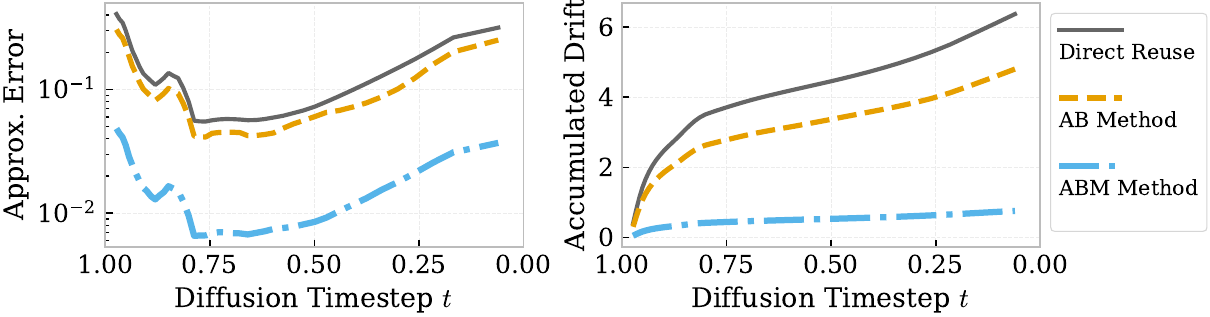}
   \caption{Comparison of Approximation Error and Accumulated Drift}
   \label{fig:metrics_plot}
 \end{subfigure}
 \caption{Analysis of direct feature reuse limitations. (a) Feature trajectory visualization via PCA shows local smoothness along the diffusion process. (b) Comparison of per-step approximation error and accumulated drift: naive reuse incurs significant error accumulation, while higher-order prediction methods reduce drift.}
 \Description{A two-panel analysis figure. The left panel shows a PCA visualization of feature trajectories along the diffusion process. The right panel plots approximation error and accumulated drift, showing that higher-order prediction reduces drift compared with naive reuse.}
 \label{fig:combined_analysis}
\end{figure*}

\section{Related Work}
\noindent\textbf{Diffusion Models.} Diffusion models (DMs)~\cite{dhariwal2021diffusion,ho2020denoising,rombach2022high} have succeeded across diverse generative tasks.
These models generate high-quality samples through progressive denoising and iterative refinement~\cite{croitoru2023diffusion,yang2023diffusion}. Early diffusion models predominantly relied on U-Net architectures. However, they exhibited challenges in capturing long-range dependencies across spatial regions~\cite{chen2023videocrafter1,chen2024videocrafter2}. 
To address these limitations, Diffusion Transformers (DiTs)~\cite{peebles2023scalable} replace the U-Net backbone with a transformer-based architecture~\cite{blattmann2023stable,ramesh2022hierarchical}. Despite their impressive generation quality and scalability advantages, DiTs incur higher inference latency than their predecessors.
This computational burden has motivated extensive research into accelerating DiT inference.

\textbf{Efficient Diffusion Models.}
Numerous approaches have been proposed to accelerate diffusion models.
DDIM~\cite{song2020denoising} introduces a non-Markovian formulation that allows deterministic sampling with significantly fewer steps. 
Model complexity reduction methods primarily focus on distillation~\cite{salimans2022progressive,meng2023distillation} and quantization~\cite{shang2023post,li2023q}, which transfer knowledge to smaller models or reduce numerical precision, respectively. However, these methods often introduce notable trade-offs. Large reductions in sampling steps can compromise generation quality and diversity. Distillation requires substantial computational resources and may be limited by teacher model performance, while extreme quantization can lead to noticeable degradation.



\textbf{Diffusion Model Caching.}
Cache-based acceleration methods have emerged as a promising training-free paradigm that exploits temporal redundancy in the denoising process. 
DeepCache~\cite{ma2024deepcache} pioneers this approach by caching and reusing high-level features across adjacent denoising timesteps. $\Delta$-DiT~\cite{chen2024delta} introduces a DiT-specific caching mechanism that stores feature differences rather than raw outputs. 
FORA~\cite{selvaraju2024fora} implements a static caching strategy with a fixed interval parameter. 
More sophisticated approaches include PAB~\cite{zhao2024real}, which applies different broadcast ranges for spatial, temporal, and cross-modal attention mechanisms. 
TeaCache~\cite{liu2025timestep} intelligently determines when to use cached results based on input similarity. 
SmoothCache~\cite{liu2025smoothcache} analyzes layer-wise representation errors from calibration data to adaptively determine caching intensity. 
ProfilingDiT~\cite{ma2025model} distinguishes foreground-focused and background-focused transformer blocks, selectively caching static background features.

While these direct feature reuse methods show impressive acceleration, 
they rely on features from previous timesteps that may not perfectly align with the current denoising stage, causing accumulated errors that manifest as artifacts or quality loss.
An alternative approach involves predicting future timestep features rather than directly reusing cached results. 
TaylorSeer~\cite{liu2025reusing} employs Taylor series expansion to forecast features at future timesteps. 
AB-Cache~\cite{yu2025ab} applies an Adams-Bashforth predictor to forecast cached features but lacks a corrector mechanism and uses a fixed skip interval, offering no adaptive control over prediction accuracy. 
ABM-Solver~\cite{ma2025adams} operates at the ODE solver level, replacing standard solvers (e.g., Euler, DDIM) with Adams-Bashforth-Moulton integration for the diffusion sampling process; it is designed primarily for inversion and editing tasks rather than caching-based acceleration.
Yet these prediction-based methods adopt fixed skip intervals without adaptive control, leading to error accumulation when feature dynamics vary significantly across the diffusion trajectory.

\section{Method}
We present PrediT, a training-free acceleration framework for DiTs that formulates feature prediction as a linear multistep problem. We first analyze the limitations of existing reuse-based methods and the smoothness property of diffusion trajectories, then introduce our predictor-corrector scheme with dynamic step modulation.

\subsection{Preliminaries}

\textbf{Diffusion Models.}
DMs generate data by learning to reverse a gradual noising process. Given clean data $x_0$, a forward process progressively adds Gaussian noise over timesteps $t \in [0, 1]$ to produce $x_t$. The reverse process starts from pure noise $x_1 \sim \mathcal{N}(0, I)$ and iteratively denoises to recover $x_0$. It can be formulated as solving an Ordinary Differential Equation (ODE)~\cite{song2020denoising,song2020score}:
\begin{equation}
\frac{dx}{dt} = f_\theta(x_t, t),
\label{eq:ode}
\end{equation}
where $f_\theta$ is the model's velocity prediction~\cite{lipman2022flow}. The solution is obtained by numerical integration from $t=1$ to $t=0$.

\textbf{Diffusion Transformers.}
Diffusion Transformers (DiT)~\cite{peebles2023scalable} replace the U-Net backbone with a transformer architecture. The latent $z_t$ is patchified into tokens, processed through self-attention and MLP layers, and unpatchified back to latent dimensions. Each DiT block applies Adaptive Layer Normalization (AdaLN) conditioned on timestep $t$:
\begin{equation}
h' = \text{Attn}(\text{AdaLN}(h)), \quad h'' = \text{MLP}(\text{AdaLN}(h')).
\label{eq:dit_block}
\end{equation}
While DiT achieves superior generation quality, the quadratic attention cost combined with dozens of iterative steps makes inference computationally expensive.

\textbf{Feature Reuse.}
Current caching methods accelerate inference by reusing previous model outputs~\cite{ma2024deepcache,chen2024delta}. Given the current state $x_n$ at timestep $t_n$, these methods approximate the model output as:
\begin{equation}
\hat{f}_n \approx f_{n-k},
\label{eq:reuse}
\end{equation}
where $f_{n-k}$ is a cached output from $k$ steps earlier. This zero-order approximation assumes $f$ remains constant, incurring a local truncation error (LTE) of $\mathcal{O}(\Delta t)$. This error accumulates over consecutive skips, causing latent drift and visual artifacts.

\textbf{Linear Multistep Methods.}
Linear multistep methods~\cite{hairer1993solving,butcher2016numerical} are classical numerical techniques for solving ODEs that use multiple previous function values to achieve higher-order accuracy. Starting from the integral form of \cref{eq:ode}:
\begin{equation}
x_{n+1} = x_n + \int_{t_n}^{t_{n+1}} f(x(s), s) \, ds,
\label{eq:integral}
\end{equation}
these methods approximate the integrand by a polynomial interpolating past values $\{f_n, f_{n-1}, \ldots\}$, achieving LTE of $\mathcal{O}(\Delta t^{k+1})$ for order $k$. This provides a principled framework for predicting model outputs without explicit derivative estimation~\cite{lu2022dpm,lu2025dpm}.

\begin{figure*}[t]
  \centering
  \includegraphics[width=\linewidth]{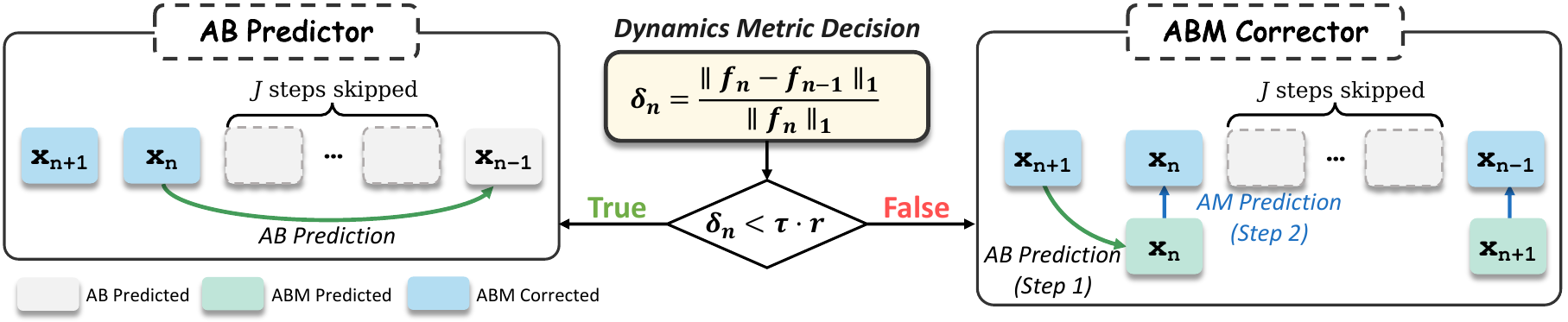}
  \caption{Overview of the PrediT framework. PrediT formulates feature prediction as a linear multistep problem, using historical model outputs to extrapolate future features and reduce redundant model calls. The threshold $\tau$ and correction ratio $r$ control the selection between ABM for high-dynamics and AB for low-dynamics regions.}
  \Description{A pipeline diagram of the PrediT framework showing historical model outputs feeding an Adams-Bashforth predictor, an Adams-Moulton corrector for high-dynamics regions, and dynamic step modulation to control skip intervals.}
  \label{fig:overview}
 \end{figure*}

\subsection{PrediT Framework}

As illustrated in \cref{fig:overview}, PrediT combines an Adams-Bashforth predictor with an Adams-Moulton corrector under dynamic step modulation.

\noindent\textbf{Adams-Bashforth Predictor.} The Adams-Bashforth (AB) method is explicit. It extrapolates from historical function values~\cite{liu2022pseudo}. For order $k$, the method takes the form:
\begin{equation}
x_{n+1} = x_n + \Delta t \sum_{j=0}^{k-1} \beta_j f_{n-j},
\label{eq:ab}
\end{equation}
where $\beta_j$ are coefficients derived from Lagrange interpolation. Detailed coefficients are provided in the supplementary material. For the second-order case (AB2), we have:
\begin{equation}
x_{n+1} = x_n + \frac{\Delta t}{2}(3f_n - f_{n-1}),
\label{eq:ab2}
\end{equation}
which achieves LTE of $\mathcal{O}(\Delta t^3)$, compared to $\mathcal{O}(\Delta t^2)$ for Euler. Crucially, AB directly combines historical function values without explicit derivative estimation, avoiding the instability of finite-difference approaches.

\textbf{Adams-Moulton Corrector.}
While AB is efficient, its explicit nature can accumulate errors in high-dynamics regions. The Adams-Moulton (AM) method is an implicit corrector that includes the future value $f_{n+1}$:
\begin{equation}
x_{n+1} = x_n + \Delta t \sum_{j=-1}^{k-1} \gamma_j f_{n-j},
\label{eq:am}
\end{equation}
where $\gamma_{-1}$ corresponds to $f_{n+1}$. For the second-order case (AM2):
\begin{equation}
x_{n+1} = x_n + \frac{\Delta t}{12}(5f_{n+1} + 8f_n - f_{n-1}).
\label{eq:am2}
\end{equation}
Combining AB as predictor and AM as corrector yields the Adams-Bashforth-Moulton (ABM) scheme~\cite{karras2022elucidating}, which first predicts $\tilde{x}_{n+1}$ using AB, then evaluates $\tilde{f}_{n+1} = f_\theta(\tilde{x}_{n+1}, t_{n+1})$, and finally corrects using AM. While ABM achieves LTE of $\mathcal{O}(\Delta t^4)$ and improved stability, it requires one extra model call compared to AB alone. As shown in \cref{fig:combined_analysis}(b), AB significantly reduces approximation error compared to naive reuse, and ABM further improves accuracy with lower accumulated drift. 

To balance accuracy and efficiency, PrediT dynamically selects between AB and ABM based on the relative feature change rate:
\begin{equation}
\delta_n = \frac{\|f_n - f_{n-1}\|_1}{\|f_n\|_1 + \epsilon},
\label{eq:delta}
\end{equation}
where $\epsilon$ is a small constant for numerical stability. ABM is applied in high-dynamics regions where accuracy is critical, while AB is used in smooth regions to maximize speedup.

\textbf{Why ABM.}
We adopt the Adams-Bashforth-Moulton (ABM) family over alternatives such as Taylor expansion and BDF for three reasons. First, the AB predictor extrapolates directly from historical function values $\{f_n, f_{n-1}, \ldots\}$, avoiding explicit derivative estimation and the instability of finite-difference Taylor methods~\cite{liu2025reusing}. Second, the AM corrector incorporates the newly evaluated $\tilde{f}_{n+1}$ to suppress drift in high-dynamics regions under adaptive skipping. Third, AB and AM use interpolation-determined coefficients with no extra hyperparameters, giving a practical balance of efficiency and stability for diffusion trajectories; this is consistent with the strong degradation of Taylor-based methods at longer trajectories in \cref{tab:dit}.

\textbf{Dynamic Step Modulation.}
As shown in \cref{fig:sensitivity_case1}, the model's feature dynamics vary significantly along the diffusion trajectory, with initial and final phases exhibiting rapid changes. \cref{fig:sensitivity_case2} further reveals that skipping in high-dynamics regions incurs substantially larger feature deviation than in low-dynamics regions. Notably, even skipping multiple steps in smooth regions produces less error than a single skip in high-dynamics regions. Therefore, a fixed skip schedule cannot adapt to this variation~\cite{karras2024analyzing}. We compute the adaptive skip interval from the dynamics metric $\delta_n$:
\begin{equation}
J = \left\lfloor \frac{\tau}{(\delta_n + \epsilon)^{1/(p+1)}} \right\rfloor,
\label{eq:jump}
\end{equation}
where $\tau$ is the threshold parameter and $p$ is the sensitivity exponent. The decision logic selects ABM without skipping when $\delta_n \geq \tau$, ABM with limited skipping when $\tau \cdot r \leq \delta_n < \tau$, and AB only with large skipping when $\delta_n < \tau \cdot r$, where $r$ is the correction ratio. This ensures model calls focus on high-dynamics regions while maximizing skips in smooth regions.

\textbf{Robustness of DSM.}
DSM assumes that the dynamics metric $\delta_n$ can predict feature behavior over the next $J$ steps. This assumption is supported by \cref{fig:pca}, where the feature change rate remains smooth within each phase of the diffusion trajectory. If the dynamics shift during a skipped interval, the next computed step produces a large $\delta_n$, which triggers ABM correction and resets the drift and skip interval. In our experiments, this first-order metric with ABM correction is sufficient to maintain robust performance.

\textbf{Error Analysis.}
The total approximation error consists of discretization error from numerical integration, prediction error from polynomial extrapolation during skips, and accumulated drift from consecutive skips. Our framework mitigates these by using higher-order AB/AM methods to reduce discretization error, adaptive skip interval to limit prediction error in curved regions, and ABM correction to reset accumulated drift when $\delta_n$ exceeds the threshold. Detailed error bounds and the complete algorithm are provided in the supplementary material.

\begin{table*}[t]
  \centering
  \caption{Comparison of visual quality and efficiency in text-to-image generation on FLUX.}
  \label{tab:flux}
  \resizebox{\textwidth}{!}{%
  \begin{tabular}{c|cccc|ccc|ccc}
  \toprule
  \multirow{2}{*}{\textbf{Method}}              & \multicolumn{4}{c|}{\textbf{Acceleration}}                         & \multicolumn{6}{c}{\textbf{Visual Quality}} \\ \cline{2-11}
  &
   \textbf{Latency(s)$\downarrow$} &
   \multicolumn{1}{c|}{\textbf{Speedup$\uparrow$}} &
   \textbf{FLOP(T)$\downarrow$} &
   \textbf{Speedup$\uparrow$} &
   \textbf{Img.$\uparrow$} &
   \textbf{CLIP$\uparrow$} &
   \textbf{Aes.$\uparrow$} &
   \textbf{LPIPS$\downarrow$} &
   \textbf{SSIM$\uparrow$} &
   \textbf{PSNR$\uparrow$} \\ \midrule
  \rowcolor{gray!20} \textbf{Original: 50 Steps} & 23.71 & \multicolumn{1}{c|}{1.00$\times$} & 3719.50 & 1.00$\times$ & 0.9767 & 31.94 & 6.7448 & \cellcolor{gray!20}0.0000 & \cellcolor{gray!20}1.000 & \cellcolor{gray!20}$\infty$  \\ 
  $\Delta$-DiT($\mathcal{N}=2$)       & 15.86 & \multicolumn{1}{c|}{1.50$\times$} & 1859.75 & 2.00$\times$ & 0.9357 & 32.02 & 6.5344 & 0.3074 & 0.6783 & 18.63 \\ 
  TeaCache($\tau=0.4$)         & 12.31 & \multicolumn{1}{c|}{1.92$\times$} & 1438.92 & 2.58$\times$ & 0.8710 & 31.79 & 6.4400 & 0.6960 & 0.4175 & 11.54 \\
  ABM-Solver($\tau=0.1$)          & 13.05 & \multicolumn{1}{c|}{1.82$\times$} & 952.22  & 3.90$\times$ & 0.9378 & \underline{32.08} & 6.7260 & 0.4762 & 0.6014 & 17.72 \\\hline
  \rowcolor{gray!20} \textbf{Original: 25 Steps} & 11.58 & \multicolumn{1}{c|}{2.05$\times$} & 1859.75 & 2.00$\times$ & 0.9467 & 31.64 & \underline{6.7307} & 0.2970 & 0.6818 & 18.85 \\
  $\Delta$-DiT($\mathcal{N}=3$)       & 11.50 & \multicolumn{1}{c|}{2.06$\times$} & 1264.63 & 2.94$\times$ & 0.9385 & 31.87 & 6.4825 & 0.3353 & 0.6805 & 18.63 \\
  TeaCache($\tau=0.8$)         & 10.46 & \multicolumn{1}{c|}{2.26$\times$} & 899.32  & 4.13$\times$ & 0.7933 & 31.67 & 6.4048 & 0.7205 & 0.4236 & 11.80 \\ 
  FORA($\mathcal{N}=3$)               & 8.73  & \multicolumn{1}{c|}{2.71$\times$} & 1264.63 & 2.94$\times$ & 0.9715 & 32.06 & 6.5433 & 0.3942 & 0.6104 & 16.74 \\
  AB-Cache($\mathcal{N}=3$)           & 8.74  & \multicolumn{1}{c|}{2.71$\times$} & 1339.06 & 2.78$\times$ & 0.9357 & 31.85 & 6.6625 & 0.2987 & 0.6752 & \underline{19.42} \\
  TaylorSeer($\mathcal{O}=2$, $\mathcal{N}=5$)         & 6.52  & \multicolumn{1}{c|}{3.64$\times$} & 892.68  & 4.17$\times$ & 0.9676 & 31.94 & \textbf{6.7439} & 0.3848 & 0.6174 & 18.18 \\\hline
  FORA($\mathcal{N}=4$)               & 6.90  & \multicolumn{1}{c|}{3.44$\times$} & 967.07  & 3.85$\times$ & 0.9618 & 32.07 & 6.5243 & 0.4556 & 0.5716 & 15.69 \\
  AB-Cache($\mathcal{N}=4$)           & 6.49  & \multicolumn{1}{c|}{3.65$\times$} & 1026.76 & 3.62$\times$ & 0.9308 & 31.81 & 6.6681 & 0.4084 & 0.6167 & 17.88 \\
  TaylorSeer($\mathcal{O}=2$, $\mathcal{N}=6$)         & 6.02  & \multicolumn{1}{c|}{3.94$\times$} & 743.90  & 5.00$\times$ & \underline{0.9761} & 31.89 & 6.7225 & 0.5292 & 0.5180 & 16.67 \\
  \rowcolor{gray!20} \textbf{PrediT($\mathcal{O}=2$, $\tau=2$)}            & 5.54  & \multicolumn{1}{c|}{4.28$\times$} & 743.92  & 5.00$\times$ & \textbf{0.9773} & \textbf{32.19} & \underline{6.7307} & \textbf{0.2215} & \textbf{0.7354} & \textbf{20.84} \\
  \rowcolor{gray!20} \textbf{PrediT($\mathcal{O}=2$, $\tau=2.5$)}          & \textbf{4.28}  & \multicolumn{1}{c|}{\textbf{5.54$\times$}} & \textbf{639.77}  & \textbf{5.81$\times$} & 0.9506 & 32.07 & 6.7174 & \underline{0.2842} & \underline{0.6884} & 19.09 \\ \bottomrule
  \end{tabular}%
  }
  \raggedright
  \footnotesize
  $\dagger$ \textbf{Bold} indicates the best result, \underline{underline} indicates the second best. The same applies to the following tables.
  \end{table*}

\afterpage{%
\begin{figure}[t]
 \centering
 \begin{subfigure}[b]{0.49\columnwidth}
   \centering
   \includegraphics[width=\linewidth]{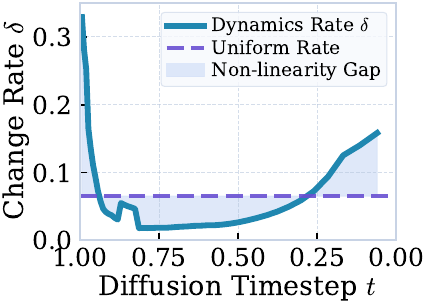}
   \caption{Non-uniform Change Rate}
   \label{fig:sensitivity_case1}
 \end{subfigure}
 \hfill
 \begin{subfigure}[b]{0.49\columnwidth}
   \centering
   \includegraphics[width=\linewidth]{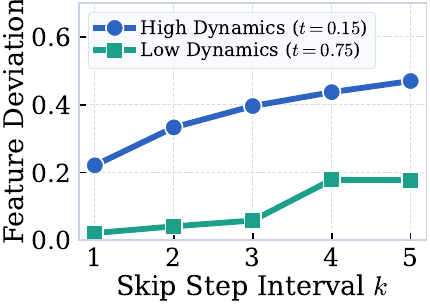}
   \caption{Skip Sensitivity}
   \label{fig:sensitivity_case2}
 \end{subfigure}
 \caption{Feature dynamics vary across the diffusion process. (a) Initial and final phases exhibit rapid changes. (b) Approximation error grows with skip interval, especially in high-dynamics regions.}
\Description{A two-panel figure showing that feature dynamics vary across timesteps and that approximation error increases as the skip interval grows, especially in high-dynamics regions.}
\label{fig:sensitivity_single_col}
\end{figure}

  \begin{figure}[t]
    \centering
    \includegraphics[width=\columnwidth]{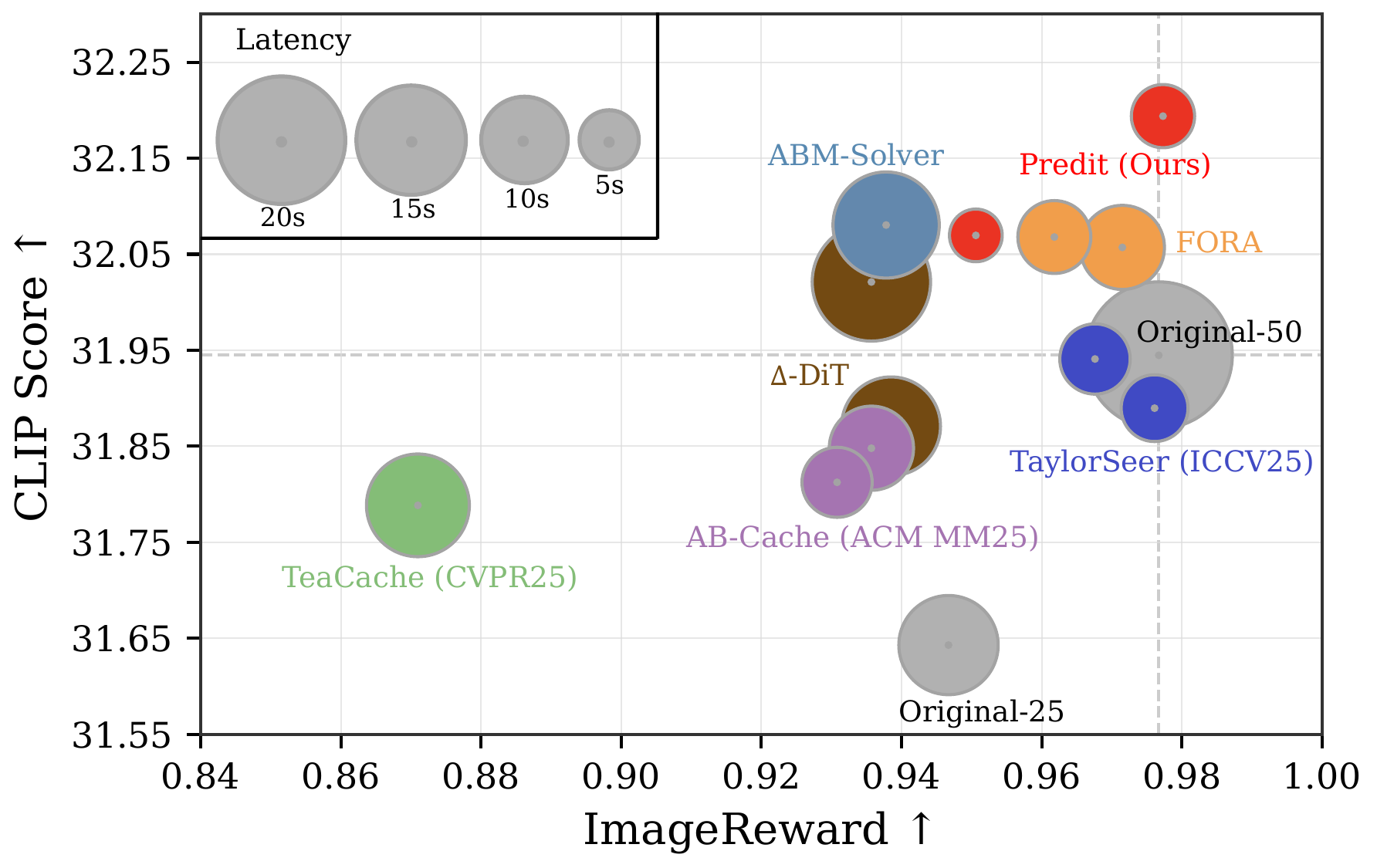}
    \caption{Bubble chart visualization of quality-efficiency trade-off on FLUX. Each bubble represents a method: position encodes ImageReward ($x$) and CLIP Score ($y$); size is proportional to inference latency (smaller is faster). PrediT (red) achieves the best quality with the lowest latency.}
    \Description{A bubble chart comparing FLUX acceleration methods, with ImageReward on the horizontal axis, CLIP Score on the vertical axis, and bubble size representing inference latency.}
    \label{fig:flux_bubble}
   \end{figure}
}

\section{Experiments}
\subsection{Experimental Settings}
\textbf{Models, Datasets, and Solvers.}
We evaluate PrediT on three representative DiT-based models. For text-to-image generation, we use FLUX.1-dev~\cite{flux2024} with 50 sampling steps and prompts from DiffusionDB~\cite{wangDiffusionDBLargescalePrompt2022}. For text-to-video generation, we use HunyuanVideo~\cite{kong2024hunyuanvideo} with the Rectified Flow~\cite{liu2022flow} solver and prompts from the VBench gallery~\cite{zhang2024evaluationagent}. For class-to-image generation, we use DiT-XL/2~\cite{peebles2023scalable} with the DDIM solver~\cite{song2020denoising} on ImageNet~\cite{deng2009imagenet} at 256$\times$256 resolution.
For the diagnostic analyses in \cref{fig:combined_analysis,fig:sensitivity_single_col}, we use the same FLUX.1-dev setup with 50 sampling steps and a subset of 20 DiffusionDB prompts. 

\textbf{Evaluation Metrics.}
For text-to-image, we report ImageReward~\cite{xu2023imagereward} to measure human preference alignment, CLIP Score~\cite{radford2021learning} for text-image consistency, and Aesthetic Score for visual appeal. For text-to-video, we use VBench~\cite{zhang2024evaluationagent} to summarize video generation performance across multiple dimensions, where Quality measures perceptual visual fidelity and Semantic measures text-video alignment. For both text-to-image and text-to-video, we use LPIPS~\cite{zhang2018unreasonable}, SSIM~\cite{wang2004image}, and PSNR to measure fidelity relative to the original outputs. For class-to-image, we report FID and sFID~\cite{heusel2017gans} for distribution similarity and Inception Score (IS)~\cite{salimans2016improved} for sample quality and diversity.
Detailed metric definitions are provided in the supplementary material.

\textbf{Implementation Details.}
All experiments are conducted with FlashAttention~\cite{dao2022flashattention} enabled. FLUX.1 and HunyuanVideo are evaluated on a single NVIDIA A800 80GB GPU, while DiT-XL/2 uses an NVIDIA RTX 4090 24GB GPU. For FLUX.1, we generate 1024$\times$1024 images. For HunyuanVideo, we test two settings: 544p$\times$960p with 17 frames and 480p$\times$640p with 45 frames. For DiT-XL/2, we generate 50K 256$\times$256 images on ImageNet for FID evaluation. All baselines share the same solver, number of denoising steps, and CFG scale, unless otherwise specified by the original method.

\subsection{Text-to-Image Generation}

\textbf{Quantitative Results.}
We evaluate PrediT on FLUX.1 for text-to-image generation and compare with representative acceleration methods in \cref{tab:flux}. The last three columns further show that PrediT preserves high fidelity to the original outputs, achieving the lowest LPIPS together with the highest SSIM and PSNR among the accelerated methods. At comparable acceleration ratios, PrediT matches or improves generation quality across all metrics. Specifically, PrediT achieves superior speedup while attaining the highest ImageReward and CLIP Score, even surpassing the original 50-step baseline. When pushing to more aggressive acceleration, PrediT reaches up to $5.54\times$ speedup with only marginal quality degradation. In contrast, reuse-based methods such as $\Delta$-DiT and FORA suffer from noticeable quality drops at lower speedups. TeaCache achieves only $2.26\times$ speedup but yields a lower ImageReward, significantly below the original. Prediction-based methods like TaylorSeer show improved quality retention but still fall short of PrediT at equivalent accelerations.

To provide a more intuitive comparison, we visualize the trade-off between generation quality and inference efficiency in \cref{fig:flux_bubble}. Each bubble represents a method, where the position indicates ImageReward (x-axis) and CLIP Score (y-axis), and the bubble size is proportional to inference latency—smaller bubbles correspond to faster methods. As shown, PrediT occupies the upper-right region with the smallest bubbles, demonstrating that it simultaneously achieves the best quality metrics and the lowest latency among all compared methods. The dashed lines indicate the Original 50-step baseline performance for reference.

\begin{figure}[t]
  \centering
  \includegraphics[width=\linewidth]{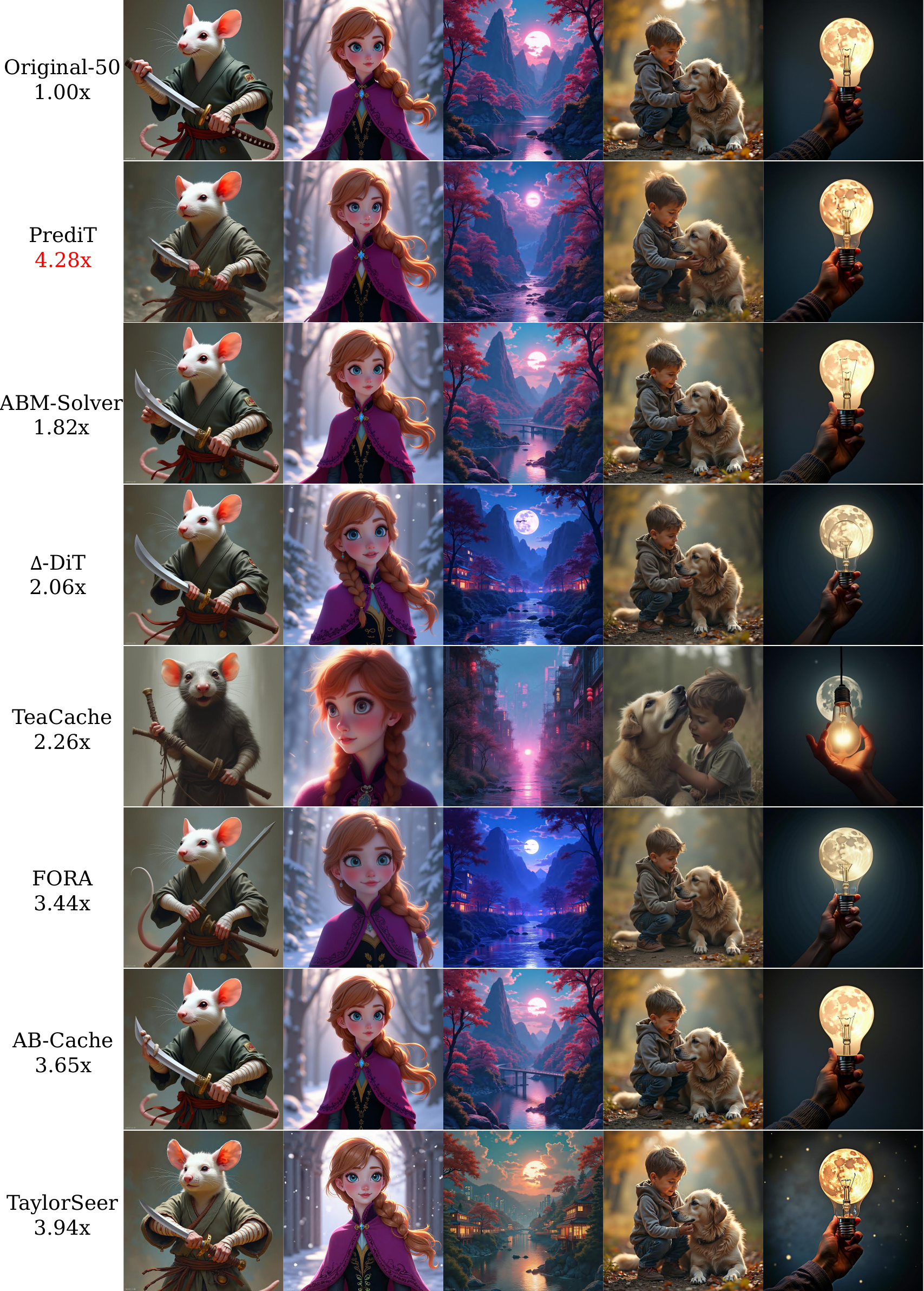}
  \caption{Visual quality comparison of different acceleration methods on FLUX. PrediT achieves higher acceleration while maintaining comparable visual quality. Zoom in for more details.}
  \Description{A qualitative comparison grid of FLUX text-to-image results across several prompts, contrasting the original model, baseline acceleration methods, and PrediT.}
  \label{fig:flux}
 \end{figure}

\textbf{Visual Comparison.}
\cref{fig:flux} presents qualitative comparisons on FLUX.1, and additional results with more prompts are provided in the supplementary material. Other acceleration methods exhibit visible artifacts including blurriness and loss of fine details, particularly in regions with complex textures. PrediT maintains visual fidelity comparable to the original model while achieving significantly higher acceleration, demonstrating the effectiveness of our principled prediction approach over naive feature reuse.

\textbf{Ablation Study.}
To validate the contribution of each component in PrediT, we conduct an ablation study on FLUX.1 as shown in \cref{tab:ablation}. The upper part of the table analyzes different module combinations. Using only AB or only ABM does not yield the best overall trade-off, even when combined with DSM. AB-based variants can provide higher acceleration but suffer from weaker quality, while ABM-based variants improve quality but are less efficient. In contrast, PrediT-DSM, which combines AB, ABM, and DSM in a unified framework, achieves the best balance between acceleration and generation quality. The lower part further compares PrediT-DSM with static skipping baselines using fixed skip intervals $J \in \{3,4,5\}$. Although larger fixed $J$ improves speedup, it leads to less stable quality. PrediT-DSM outperforms these static baselines in the overall quality-efficiency trade-off, showing the benefit of adaptive step modulation over fixed skipping.

\begin{table}[t]
 \centering
 \caption{Ablation study of predictor-corrector components and DSM, with comparison to static skipping baselines on FLUX.1.}
 \label{tab:ablation}
 \resizebox{\columnwidth}{!}{%
 \begin{tabular}{c|cc|cc}
 \toprule
 \textbf{Method} & \multicolumn{2}{c|}{\textbf{Acceleration}} & \textbf{Image} & \textbf{CLIP} \\ \cline{2-3}
 \textbf{FLUX.1} & \textbf{Latency(s) $\downarrow$} & \textbf{Speedup $\uparrow$} & \textbf{Reward $\uparrow$} & \textbf{Score $\uparrow$} \\ \midrule
 \rowcolor{gray!20} \textbf{Original: 50 Steps}        & 23.71            & 1.00$\times$            & 0.9767         & 31.9448       \\
 \textbf{Only AB \textit{w/o} \texttt{DSM}}  & 6.49             & 3.65$\times$            & 0.9308         & 31.8125       \\
 \textbf{Only ABM \textit{w/o} \texttt{DSM}} & 11.89            & 1.99$\times$            & 0.9318         & 31.8634       \\
 \textbf{Only AB + \texttt{DSM}}             & 3.65             & 6.49$\times$            & 0.9374         & 31.8534       \\
 \textbf{Only ABM + \texttt{DSM}}            & 6.15             & 3.86$\times$            & 0.9671         & 32.1778       \\ \hline
 \textbf{Static Skip (J=3)} & 8.64 & 2.73$\times$ & 0.9796 & 31.3496 \\
 \textbf{Static Skip (J=4)} & 6.79 & 3.47$\times$ & 0.9753 & 31.3597 \\
 \textbf{Static Skip (J=5)} & 5.37 & 4.39$\times$ & 0.9606 & 31.3010 \\
 \rowcolor{gray!20} \textbf{PrediT-DSM}          & 5.54             & 4.28$\times$            & 0.9773         & 32.1938       \\ \bottomrule
 \end{tabular}%
 }
 \end{table}
 
 \begin{table*}[t]
   \centering
   \caption{Comparison of visual quality and efficiency in text-to-video generation on HunyuanVideo on a single A800 GPU.}
   \label{tab:video}
   \resizebox{\textwidth}{!}{%
   \begin{tabular}{c|c|cc|ccc|ccc}
   \toprule
   \multirow{2}{*}{\textbf{Settings}} & \multirow{2}{*}{\textbf{Method}} & \multicolumn{2}{c|}{\textbf{Acceleration}} & \multicolumn{6}{c}{\textbf{Visual Quality}} \\ \cline{3-10}
    &  & \textbf{Latency(s) $\downarrow$} & \textbf{Speedup $\uparrow$} & \textbf{VBench$\uparrow$} & \textbf{Quality$\uparrow$} & \textbf{Semantic$\uparrow$} & \textbf{LPIPS$\downarrow$} & \textbf{SSIM$\uparrow$} & \textbf{PSNR$\uparrow$} \\ \midrule
   \multirow{9}{*}{\begin{tabular}[c]{@{}c@{}}544p$\times$960p, \\ 17 frames\end{tabular}} & \cellcolor{gray!20}\textbf{Original: 50 Steps} & \cellcolor{gray!20}76.71 & \cellcolor{gray!20}1.00$\times$ & \cellcolor{gray!20}82.08 & \cellcolor{gray!20}82.86 & \cellcolor{gray!20}78.98 & \cellcolor{gray!20}0.0000 & \cellcolor{gray!20}1.000 & \cellcolor{gray!20}$\infty$ \\
    & PAB($\mathcal{N}=8$) & 66.21 & 1.16$\times$ & 80.81 & 81.36 & 78.60 & 0.1433 & 0.8370 & \textbf{27.41} \\
    & TeaCache($\tau=0.14$) & 57.71 & 1.33$\times$ & 82.18 & 82.36 & 81.44 & 0.1466 & 0.8536 & 25.64 \\ \cline{2-10}
    & \textbf{Original: 25 Steps} & 40.29 & 1.90$\times$ & 81.51 & 82.10 & 78.68 & 0.1973 & 0.8196 & 23.17 \\
    & ProfilingDiT($\mathcal{N}=6$) & 44.60 & 1.72$\times$ & 80.33 & 80.02 & \underline{81.58} & 0.2610 & 0.6662 & 18.77 \\
    & TaylorSeer($\mathcal{O}=1$, $\mathcal{N}=4$) & 39.30 & 1.95$\times$ & 82.05 & 82.33 & 80.90 & 0.4021 & 0.5526 & 15.33 \\
    & ABM-Solver($\tau=0.1$) & 34.84 & 2.20$\times$ & 82.05 & 82.33 & 80.90 & 0.1632 & 0.8488 & 25.07 \\
    & AB-Cache($\mathcal{N}=3$) & 32.14 & 2.39$\times$ & \underline{82.36} & \underline{82.72} & 80.93 & 0.1587 & 0.8512 & 24.67 \\
    & \cellcolor{gray!20}\textbf{PrediT($\mathcal{O}=2$, $\tau=2$)} & \cellcolor{gray!20}27.21 & \cellcolor{gray!20}2.81$\times$ & \cellcolor{gray!20}\textbf{82.45} & \cellcolor{gray!20}\textbf{83.04} & \cellcolor{gray!20}80.10 & \cellcolor{gray!20}\textbf{0.1329} & \cellcolor{gray!20}\textbf{0.8693} & \cellcolor{gray!20}\underline{25.81} \\
    & \cellcolor{gray!20}\textbf{PrediT($\mathcal{O}=3$, $\tau=2.5$)} & \cellcolor{gray!20}\textbf{23.36} & \cellcolor{gray!20}\textbf{3.28$\times$} & \cellcolor{gray!20}\underline{82.36} & \cellcolor{gray!20}82.27 & \cellcolor{gray!20}\textbf{82.73} & \cellcolor{gray!20}\underline{0.1429} & \cellcolor{gray!20}\underline{0.8616} & \cellcolor{gray!20}25.77 \\ \midrule
   \multirow{7}{*}{\begin{tabular}[c]{@{}c@{}}480p$\times$640p, \\ 45 frames\end{tabular}} & \cellcolor{gray!20}\textbf{Original: 50 Steps} & \cellcolor{gray!20}117.43 & \cellcolor{gray!20}1.00$\times$ & \cellcolor{gray!20}80.12 & \cellcolor{gray!20}80.51 & \cellcolor{gray!20}78.54 & \cellcolor{gray!20}0.0000 & \cellcolor{gray!20}1.000 & \cellcolor{gray!20}$\infty$ \\
    & PAB($\mathcal{N}=8$) & 95.14 & 1.23$\times$ & 78.45 & 78.94 & 76.52 & 0.1712 & 0.7969 & \textbf{25.67} \\
    & TeaCache($\tau=0.14$) & 77.86 & 1.51$\times$ & 80.05 & 80.43 & 78.52 & 0.1701 & 0.8036 & 23.48 \\ \cline{2-10}
    & \textbf{Original: 25 Steps} & 62.14 & 1.89$\times$ & 79.76 & 80.39 & 77.25 & 0.2221 & 0.7599 & 20.95 \\
    & ABM-Solver($\tau=0.1$) & 50.18 & 2.34$\times$ & 79.71 & 79.86 & \underline{79.11} & 0.1650 & 0.8186 & 24.02 \\
    & AB-Cache($\mathcal{N}=3$) & 49.29 & 2.38$\times$ & 79.89 & \underline{80.66} & 76.85 & 0.1801 & 0.7995 & 22.68 \\
    & \cellcolor{gray!20}\textbf{PrediT($\mathcal{O}=2$, $\tau=2$)} & \cellcolor{gray!20}41.29 & \cellcolor{gray!20}2.84$\times$ & \cellcolor{gray!20}\textbf{80.14} & \cellcolor{gray!20}80.35 & \cellcolor{gray!20}\textbf{79.32} & \cellcolor{gray!20}\textbf{0.1401} & \cellcolor{gray!20}\textbf{0.8345} & \cellcolor{gray!20}\underline{24.25} \\
    & \cellcolor{gray!20}\textbf{PrediT($\mathcal{O}=3$, $\tau=2.5$)} & \cellcolor{gray!20}\textbf{36.14} & \cellcolor{gray!20}\textbf{3.24$\times$} & \cellcolor{gray!20}\underline{80.08} & \cellcolor{gray!20}\textbf{80.80} & \cellcolor{gray!20}77.25 & \cellcolor{gray!20}\underline{0.1513} & \cellcolor{gray!20}\underline{0.8280} & \cellcolor{gray!20}24.06 \\ \bottomrule
   \end{tabular}%
   }
   \raggedright
   \footnotesize
   $\dagger$ ProfilingDiT and TaylorSeer can only generate up to 17 frames at 544p$\times$960p resolution on a single A800 GPU, and run OOM at 480p$\times$640p with 45 frames.
 \end{table*}
 
\subsection{Text-to-Video Generation}

\textbf{Quantitative Results.}
We extend our evaluation to text-to-video generation on HunyuanVideo, testing two resolution settings as shown in \cref{tab:video}. At 544p$\times$960p with 17 frames, PrediT achieves $3.28\times$ speedup while maintaining the highest VBench score and best visual metrics. Compared to other methods, PAB and TeaCache provide limited speedups with 50 sampling steps. Prediction-based methods face challenges in video generation; for example, TaylorSeer achieves comparable speedup but with degraded visual quality, indicating accumulated prediction errors across video frames. AB-Cache shows better quality retention but PrediT still outperforms it in both acceleration and visual fidelity. At 480p$\times$640p with 45 frames, the computational challenge increases, yet PrediT maintains consistent acceleration with the best VBench score. Supplementary memory measurements further show that PrediT adds only 0.1\%--2.1\% GPU memory over the original model across five HunyuanVideo settings, while TeaCache requires 7.5\%--12.1\% extra memory. PAB already increases memory by 27.6\%--80.2\% in the few settings where it runs, and ProfilingDiT and TaylorSeer encounter out-of-memory (OOM) errors on most larger settings. These results highlight that PrediT remains practical for longer sequences and higher resolutions rather than merely accelerating easy cases.

\begin{figure}[t]
 \centering
 \begin{subfigure}[b]{0.49\columnwidth}
   \centering
   \includegraphics[width=\linewidth]{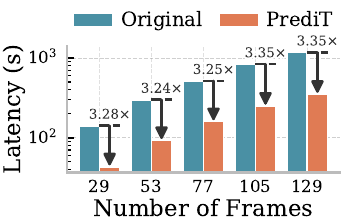}
   \caption{Latency scaling at 544P}
   \label{fig:acceleration_544x960}
 \end{subfigure}
 \hfill
 \begin{subfigure}[b]{0.49\columnwidth}
   \centering
   \includegraphics[width=\linewidth]{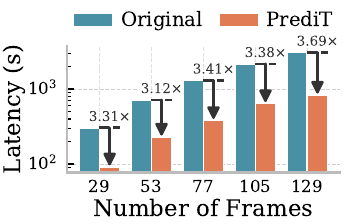}
   \caption{Latency scaling at 720P}
   \label{fig:acceleration_720x1280}
 \end{subfigure}
 \caption{Latency comparison between the original method and PrediT on HunyuanVideo at 544P and 720P resolutions across different frame counts.}
 \Description{Two line plots comparing latency versus frame count for the original HunyuanVideo pipeline and PrediT at 544P and 720P resolutions.}
 \label{fig:acceleration}
\end{figure}

\textbf{Scaling Analysis.}
\cref{fig:acceleration} illustrates the latency scaling behavior across different frame counts at 544P and 720P resolutions. The original method exhibits near-linear latency growth with increasing frames, becoming prohibitive for longer video generation. PrediT consistently reduces latency across all configurations, with the acceleration ratio remaining stable as frame count increases. This demonstrates that our dynamic step modulation adapts effectively to the temporal structure of video diffusion, maintaining efficient inference regardless of video length.

\textbf{Visual Comparison.}
\cref{fig:hy} presents visual comparisons on HunyuanVideo; additional visual results are provided in the supplementary material. Other acceleration methods suffer from noticeable quality degradation, including blurriness and loss of fine details in both spatial and temporal dimensions. PrediT preserves the visual quality of the original model, with sharp details and consistent motion across frames, validating the effectiveness of our predictor-corrector scheme.

\begin{figure*}[t]
 \centering
 \includegraphics[width=\linewidth]{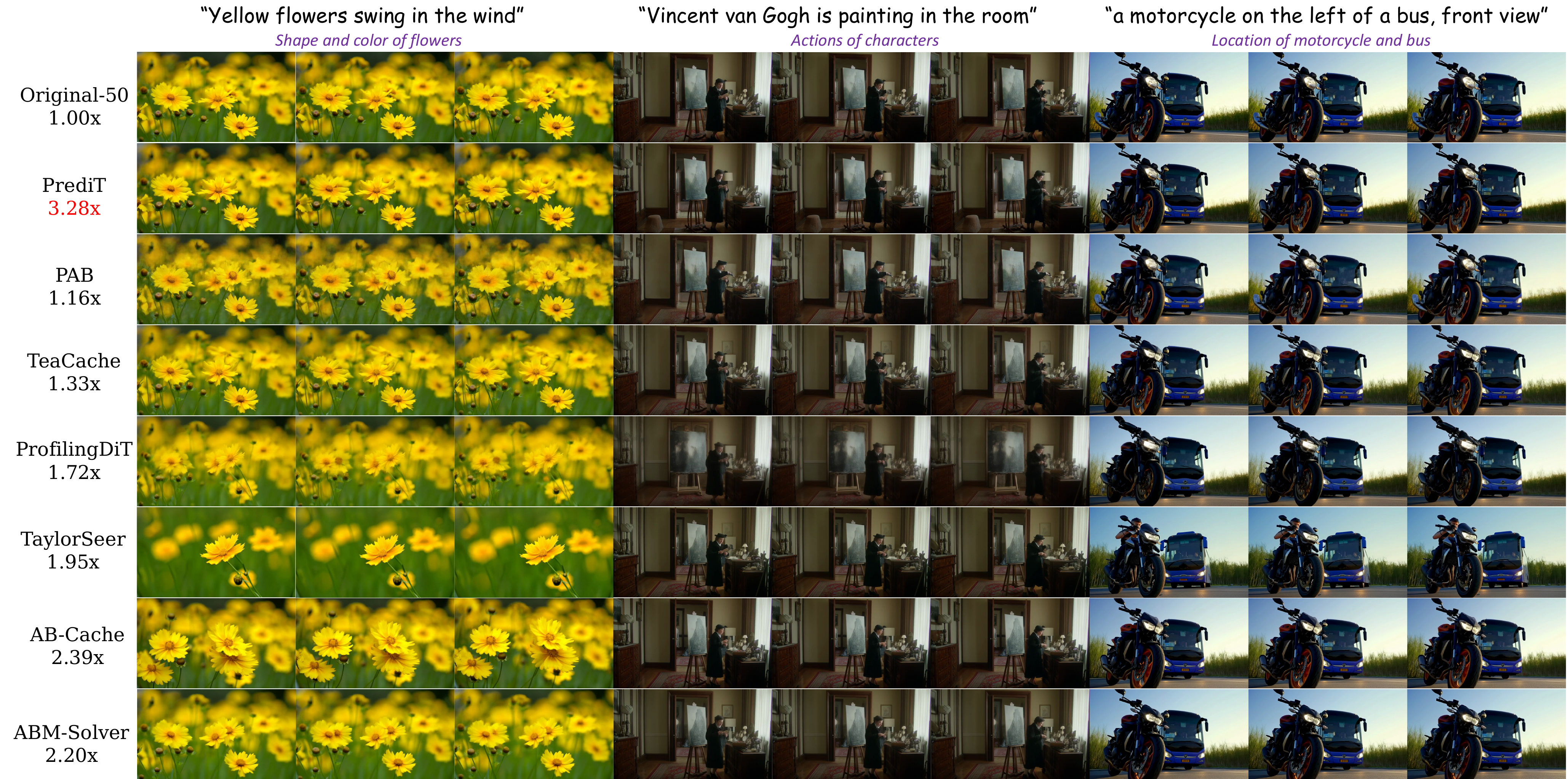}
 \caption{Visual quality comparison of different acceleration methods on HunyuanVideo. Other acceleration methods suffer from blurriness or loss of details, while PrediT maintains comparable visual quality with higher acceleration. Zoom in for more details.}
 \Description{A qualitative comparison grid of video frames from HunyuanVideo, showing the original method, baseline accelerators, and PrediT across several prompts.}
\label{fig:hy}
\end{figure*}

\afterpage{\afterpage{%
\begin{table}[!t]
 \centering
 \caption{Comparison of visual quality and efficiency in class-to-image generation on DiT-XL/2.}
 \label{tab:dit}
 \resizebox{\columnwidth}{!}{%
 \begin{tabular}{c|c|cc|ccc}
 \toprule
 \textbf{Method} & \textbf{Steps} & \textbf{Latency(s) $\downarrow$} & \textbf{Speedup $\uparrow$} & \textbf{FID$\downarrow$} & \textbf{sFID$\downarrow$} & \textbf{IS$\uparrow$} \\ \midrule
 \rowcolor{gray!20}\textbf{DDIM} & 50 & 0.569 & 1.00$\times$ & 2.28 & 4.25 & 242.26 \\
 L2C$_{0.5}$ & 50 & 0.455 & 1.25$\times$ & 2.39 & \textbf{4.40} & 235.75 \\
 FORA$_{2}$ & 50 & 0.331 & 1.72$\times$ & 2.73 & 4.90 & 234.57 \\
 FORA$_{3}$ & 50 & 0.280 & 2.03$\times$ & 3.52 & 6.39 & 226.18 \\
 TaylorSeer$^{3}_{3}$ & 50 & 0.409 & 1.39$\times$ & 2.41 & 4.69 & 235.83 \\
 \rowcolor{gray!20}\textbf{PrediT}$^{4}_{1,0.1}$ & 50 & 0.309 & 1.84$\times$ & \textbf{2.23} & \underline{4.60} & \textbf{240.10} \\
 \rowcolor{gray!20}\textbf{PrediT}$^{4}_{1,0.3}$ & 50 & \textbf{0.268} & \textbf{2.12$\times$} & \underline{2.24} & 4.78 & \underline{239.55} \\ \midrule
 \rowcolor{gray!20}\textbf{DDIM} & 70 & 0.779 & 1.00$\times$ & 2.21 & 4.29 & 243.92 \\
 FORA$_{2}$ & 70 & 0.449 & 1.73$\times$ & 2.43 & \textbf{4.59} & \underline{244.19} \\
 FORA$_{3}$ & 70 & 0.335 & 2.33$\times$ & 2.85 & 5.40 & 241.42 \\
 TaylorSeer$^{3}_{3}$ & 70 & 0.553 & 1.41$\times$ & 10.12 & 16.35 & 175.62 \\
 \rowcolor{gray!20}\textbf{PrediT}$^{4}_{1,0.1}$ & 70 & 0.356 & 2.19$\times$ & \textbf{2.15} & \underline{4.61} & \textbf{245.15} \\
 \rowcolor{gray!20}\textbf{PrediT}$^{4}_{1,0.3}$ & 70 & \textbf{0.314} & \textbf{2.48$\times$} & \underline{2.24} & 4.77 & 239.54 \\ \bottomrule
 \end{tabular}%
 }
\end{table}%
\begin{figure}[!t]
 \centering
 \includegraphics[width=\linewidth]{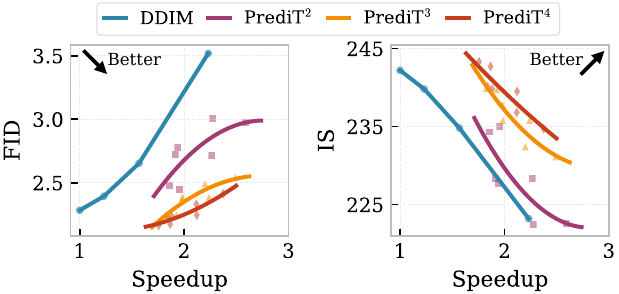}
 \caption{Quality-speedup trade-off on DiT-XL/2. PrediT achieves better quality at a higher speedup compared to the original model.}
 \Description{A trade-off plot on DiT-XL/2 comparing DDIM and PrediT configurations, showing that PrediT attains better quality at higher speedups.}
 \label{fig:dit_performance}
\end{figure}
}}

\subsection{Class-to-Image Generation}

\textbf{Quantitative Results.}
In \cref{tab:dit}, we evaluate PrediT on DiT-XL/2 for class-conditional ImageNet generation, comparing with existing acceleration methods under both 50-step and 70-step DDIM sampling. Notably, Learning-to-Cache (L2C)~\cite{ma2024learningtocache} learns to conduct caching in a dynamic manner for diffusion transformers, which is not training-free. At 50 steps, PrediT achieves $2.12\times$ speedup while improving FID from 2.28 to 2.24, demonstrating that our method can simultaneously accelerate inference and enhance generation quality. This improvement stems from the reduced discretization error of our higher-order prediction scheme.
Specifically, the standard DDIM solver uses a first-order Euler update ($\mathcal{O}(\Delta t^2)$ LTE), while PrediT's AB predictor with order $k$ achieves $\mathcal{O}(\Delta t^{k+1})$ LTE. When the predictor-corrector scheme is applied at computed steps, it effectively produces a more accurate ODE solution than the baseline solver, even though some intermediate steps are skipped. Additional ablation results in the supplementary material further support this interpretation, showing that higher predictor orders consistently yield better FID than lower orders. In comparison, FORA achieves $2.03\times$ speedup but degrades FID to 3.52, and L2C provides only $1.25\times$ acceleration. At 70 steps, PrediT maintains consistent performance with $2.48\times$ speedup and FID of 2.24. Notably, TaylorSeer exhibits severe quality collapse at 70 steps with FID degrading to 10.12, indicating numerical instability in its Taylor series extrapolation over longer sampling trajectories. PrediT avoids this issue through the stability properties of Adams methods and adaptive corrector activation.

\noindent\textbf{Quality-Speedup Trade-off.} \cref{fig:dit_performance} shows the quality-speedup trade-off between DDIM and PrediT with different predictor orders. Higher-order predictors achieve better FID and IS at the cost of slightly reduced speedup. Notably, PrediT with $\mathcal{O}=4$ surpasses the DDIM baseline in both FID and IS while still providing acceleration, demonstrating that our method can improve quality while accelerating inference. This also allows flexible parameter choices. Supplementary ablations reveal a consistent trend: increasing the predictor order from $\mathcal{O}=2$ to $\mathcal{O}=4$ steadily improves FID, while reducing $\tau$ from 1.25 to 0.75 invokes more corrections and further improves quality at some cost in speedup. For example, with $\mathcal{O}=4$, the FID decreases from 2.41 at $(\tau=1.25, r=0.3)$ to 2.17 at $(\tau=0.75, r=0.3)$, and reaches 2.16 at $(\tau=0.75, r=0.1)$ while still retaining $1.76\times$ acceleration. These results reinforce that higher-order prediction and more conservative correction thresholds trade a modest amount of speed for clearly better sample quality. Overall, these trends indicate that PrediT exposes a controllable Pareto frontier, allowing practitioners to choose a suitable operating point.

\section{Conclusion}

In this paper, we presented PrediT, a training-free acceleration framework for DiTs that formulates feature prediction as a linear multistep problem. Observing that model outputs evolve smoothly along the diffusion trajectory, we moved beyond naive feature reuse to principled prediction. Our framework combines an Adams-Bashforth predictor, an Adams-Moulton corrector, and dynamic step modulation to enable aggressive skipping in smooth regions while maintaining accuracy in high-dynamics regions. Experiments across diverse DiT-based models demonstrate that PrediT delivers substantial acceleration with strong output quality. One current limitation is that DSM operates at the full-model output level rather than at a finer block or layer granularity. An important direction for future work is to extend this control to finer-grained modules, which may further improve the accuracy-efficiency trade-off and broaden applicability to emerging architectures such as Mixture-of-Experts DiTs and interactive generation settings.


\bibliographystyle{ACM-Reference-Format}
\bibliography{sample-base}

@inproceedings{peebles2023scalable,
  title={Scalable diffusion models with transformers},
  author={Peebles, William and Xie, Saining},
  booktitle={Proceedings of the IEEE/CVF international conference on computer vision},
  pages={4195--4205},
  year={2023}
}

@inproceedings{ma2024deepcache,
  title={Deepcache: Accelerating diffusion models for free},
  author={Ma, Xinyin and Fang, Gongfan and Wang, Xinchao},
  booktitle={Proceedings of the IEEE/CVF conference on computer vision and pattern recognition},
  pages={15762--15772},
  year={2024}
}

@article{chen2024delta,
  title={$\Delta $-DiT: A Training-Free Acceleration Method Tailored for Diffusion Transformers},
  author={Chen, Pengtao and Shen, Mingzhu and Ye, Peng and Cao, Jianjian and Tu, Chongjun and Bouganis, Christos-Savvas and Zhao, Yiren and Chen, Tao},
  journal={arXiv preprint arXiv:2406.01125},
  year={2024}
}

@inproceedings{yu2025ab,
  title={Ab-cache: Training-free acceleration of diffusion models via adams-bashforth cached feature reuse},
  author={Yu, Zichao and Zou, Zhen and Shao, Guojiang and Zhang, Chenwei and Xu, Shengze and Huang, Jie and Zhao, Feng and Cun, Xiaodong and Zhang, Wenyi},
  booktitle={Proceedings of the 33rd ACM International Conference on Multimedia},
  pages={10408--10417},
  year={2025}
}

@article{liu2025reusing,
  title={From reusing to forecasting: Accelerating diffusion models with taylorseers},
  author={Liu, Jiacheng and Zou, Chang and Lyu, Yuanhuiyi and Chen, Junjie and Zhang, Linfeng},
  booktitle={Proceedings of the IEEE/CVF International Conference on Computer Vision},
  pages={15853--15863},
  year={2025}
}

@article{selvaraju2024fora,
  title={Fora: Fast-forward caching in diffusion transformer acceleration},
  author={Selvaraju, Pratheba and Ding, Tianyu and Chen, Tianyi and Zharkov, Ilya and Liang, Luming},
  journal={arXiv preprint arXiv:2407.01425},
  year={2024}
}

@article{flux2024,
  title={FLUX. 1 Kontext: Flow Matching for In-Context Image Generation and Editing in Latent Space},
  author={Labs, Black Forest and Batifol, Stephen and Blattmann, Andreas and Boesel, Frederic and Consul, Saksham and Diagne, Cyril and Dockhorn, Tim and English, Jack and English, Zion and Esser, Patrick and others},
  journal={arXiv preprint arXiv:2506.15742},
  year={2025}
}

@inproceedings{wangDiffusionDBLargescalePrompt2022,
  title={Diffusiondb: A large-scale prompt gallery dataset for text-to-image generative models},
  author={Wang, Zijie J and Montoya, Evan and Munechika, David and Yang, Haoyang and Hoover, Benjamin and Chau, Duen Horng},
  booktitle={Proceedings of the 61st annual meeting of the association for computational linguistics},
  pages={893--911},
  year={2023}
}

@article{kong2024hunyuanvideo,
  title={Hunyuanvideo: A systematic framework for large video generative models},
  author={Kong, Weijie and Tian, Qi and Zhang, Zijian and Min, Rox and Dai, Zuozhuo and Zhou, Jin and Xiong, Jiangfeng and Li, Xin and Wu, Bo and Zhang, Jianwei and others},
  journal={arXiv preprint arXiv:2412.03603},
  year={2024}
}

@article{song2020denoising,
  title={Denoising diffusion implicit models},
  author={Song, Jiaming and Meng, Chenlin and Ermon, Stefano},
  journal={arXiv preprint arXiv:2010.02502},
  year={2020}
}

@inproceedings{zhang2024evaluationagent,
  title={Evaluation agent: Efficient and promptable evaluation framework for visual generative models},
  author={Zhang, Fan and Tian, Shulin and Huang, Ziqi and Qiao, Yu and Liu, Ziwei},
  booktitle={Proceedings of the 63rd Annual Meeting of the Association for Computational Linguistics (Volume 1: Long Papers)},
  pages={7561--7582},
  year={2025}
}

@inproceedings{deng2009imagenet,
 title={Imagenet: A large-scale hierarchical image database},
  author={Deng, Jia and Dong, Wei and Socher, Richard and Li, Li-Jia and Li, Kai and Fei-Fei, Li},
  booktitle={2009 IEEE conference on computer vision and pattern recognition},
  pages={248--255},
  year={2009},
  organization={Ieee}
}

@inproceedings{radford2021learning,
  title={Learning transferable visual models from natural language supervision},
  author={Radford, Alec and Kim, Jong Wook and Hallacy, Chris and Ramesh, Aditya and Goh, Gabriel and Agarwal, Sandhini and Sastry, Girish and Askell, Amanda and Mishkin, Pamela and Clark, Jack and others},
  booktitle={International conference on machine learning},
  pages={8748--8763},
  year={2021},
  organization={PmLR}
}

@inproceedings{xu2023imagereward,
  title={Imagereward: Learning and evaluating human preferences for text-to-image generation},
  author={Xu, Jiazheng and Liu, Xiao and Wu, Yuchen and Tong, Yuxuan and Li, Qinkai and Ding, Ming and Tang, Jie and Dong, Yuxiao},
  journal={Advances in Neural Information Processing Systems},
  volume={36},
  pages={15903--15935},
  year={2023}
}

@inproceedings{dao2022flashattention,
  title={Flashattention: Fast and memory-efficient exact attention with io-awareness},
  author={Dao, Tri and Fu, Dan and Ermon, Stefano and Rudra, Atri and R{\'e}, Christopher},
  journal={Advances in neural information processing systems},
  volume={35},
  pages={16344--16359},
  year={2022}
}

@inproceedings{liu2022flow,
  title={Flow Straight and Fast: Learning to Generate and Transfer Data with Rectified Flow},
  author={Liu, Xingchao and Gong, Chengyue and others},
  booktitle={The Eleventh International Conference on Learning Representations},
  year={2022}
}

@inproceedings{zhang2018unreasonable,
  title={The unreasonable effectiveness of deep features as a perceptual metric},
  author={Zhang, Richard and Isola, Phillip and Efros, Alexei A and Shechtman, Eli and Wang, Oliver},
  booktitle={Proceedings of the IEEE conference on computer vision and pattern recognition},
  pages={586--595},
  year={2018}
}

@article{wang2004image,
  title={Image quality assessment: from error visibility to structural similarity},
  author={Wang, Zhou and Bovik, Alan C and Sheikh, Hamid R and Simoncelli, Eero P},
  journal={IEEE transactions on image processing},
  volume={13},
  number={4},
  pages={600--612},
  year={2004},
  publisher={IEEE}
}

@inproceedings{heusel2017gans,
  title={Gans trained by a two time-scale update rule converge to a local nash equilibrium},
  author={Heusel, Martin and Ramsauer, Hubert and Unterthiner, Thomas and Nessler, Bernhard and Hochreiter, Sepp},
  booktitle={Advances in neural information processing systems},
  volume={30},
  year={2017}
}

@inproceedings{salimans2016improved,
  title={Improved techniques for training gans},
  author={Salimans, Tim and Goodfellow, Ian and Zaremba, Wojciech and Cheung, Vicki and Radford, Alec and Chen, Xi},
  booktitle={Advances in neural information processing systems},
  volume={29},
  year={2016}
}

@inproceedings{liu2025smoothcache,
  title={Smoothcache: A universal inference acceleration technique for diffusion transformers},
  author={Liu, Joseph and Geddes, Joshua and Guo, Ziyu and Jiang, Haomiao and Nandwana, Mahesh Kumar},
  booktitle={Proceedings of the Computer Vision and Pattern Recognition Conference},
  pages={3229--3238},
  year={2025}
}

@InProceedings{ma2025model,
  title={Model reveals what to cache: Profiling-based feature reuse for video diffusion models},
  author={Ma, Xuran and Liu, Yexin and Liu, Yaofu and Wu, Xianfeng and Zheng, Mingzhe and Wang, Zihao and Lim, Ser-Nam and Yang, Harry},
  booktitle={Proceedings of the IEEE/CVF International Conference on Computer Vision},
  pages={17150--17159},
  year={2025}
}

@article{ma2025adams,
  title={Adams Bashforth Moulton Solver for Inversion and Editing in Rectified Flow},
  author={Ma, Yongjia and Di, Donglin and Liu, Xuan and Chen, Xiaokai and Fan, Lei and Su, Tonghua and Gao, Yue},
  journal={arXiv preprint arXiv:2503.16522},
  year={2025}
}

@inproceedings{liu2025timestep,
  title={Timestep Embedding Tells: It's Time to Cache for Video Diffusion Model},
  author={Liu, Feng and Zhang, Shiwei and Wang, Xiaofeng and Wei, Yujie and Qiu, Haonan and Zhao, Yuzhong and Zhang, Yingya and Ye, Qixiang and Wan, Fang},
  booktitle={Proceedings of the Computer Vision and Pattern Recognition Conference},
  pages={7353--7363},
  year={2025}
}

@article{fui2023generative,
  title={Generative AI and ChatGPT: Applications, challenges, and AI-human collaboration},
  author={Fui-Hoon Nah, Fiona and Zheng, Ruilin and Cai, Jingyuan and Siau, Keng and Chen, Langtao},
  journal={Journal of information technology case and application research},
  volume={25},
  number={3},
  pages={277--304},
  year={2023},
  publisher={Taylor \& Francis}
}

@article{cao2025survey,
  title={A survey of ai-generated content (aigc)},
  author={Cao, Yihan and Li, Siyu and Liu, Yixin and Yan, Zhiling and Dai, Yutong and Yu, Philip and Sun, Lichao},
  journal={ACM Computing Surveys},
  volume={57},
  number={5},
  pages={1--38},
  year={2025},
  publisher={ACM New York, NY}
}

@inproceedings{dhariwal2021diffusion,
  title={Diffusion models beat gans on image synthesis},
  author={Dhariwal, Prafulla and Nichol, Alexander},
  booktitle={Advances in neural information processing systems},
  volume={34},
  pages={8780--8794},
  year={2021}
}

@inproceedings{ho2020denoising,
  title={Denoising diffusion probabilistic models},
  author={Ho, Jonathan and Jain, Ajay and Abbeel, Pieter},
  booktitle={Advances in neural information processing systems},
  volume={33},
  pages={6840--6851},
  year={2020}
}

@inproceedings{rombach2022high,
  title={High-resolution image synthesis with latent diffusion models},
  author={Rombach, Robin and Blattmann, Andreas and Lorenz, Dominik and Esser, Patrick and Ommer, Bj{\"o}rn},
  booktitle={Proceedings of the IEEE/CVF conference on computer vision and pattern recognition},
  pages={10684--10695},
  year={2022}
}

@article{blattmann2023stable,
  title={Stable video diffusion: Scaling latent video diffusion models to large datasets},
  author={Blattmann, Andreas and Dockhorn, Tim and Kulal, Sumith and Mendelevitch, Daniel and Kilian, Maciej and Lorenz, Dominik and Levi, Yam and English, Zion and Voleti, Vikram and Letts, Adam and others},
  journal={arXiv preprint arXiv:2311.15127},
  year={2023}
}

@article{ramesh2022hierarchical,
  title={Hierarchical text-conditional image generation with clip latents},
  author={Ramesh, Aditya and Dhariwal, Prafulla and Nichol, Alex and Chu, Casey and Chen, Mark},
  journal={arXiv preprint arXiv:2204.06125},
  volume={1},
  number={2},
  pages={3},
  year={2022}
}

@article{opensora,
  title={Open-sora: Democratizing efficient video production for all},
  author={Zheng, Zangwei and Peng, Xiangyu and Yang, Tianji and Shen, Chenhui and Li, Shenggui and Liu, Hongxin and Zhou, Yukun and Li, Tianyi and You, Yang},
  journal={arXiv preprint arXiv:2412.20404},
  year={2024}
}

@article{chen2023pixart,
  title={Pixart-$\alpha$: Fast training of diffusion transformer for photorealistic text-to-image synthesis},
  author={Chen, Junsong and Yu, Jincheng and Ge, Chongjian and Yao, Lewei and Xie, Enze and Wu, Yue and Wang, Zhongdao and Kwok, James and Luo, Ping and Lu, Huchuan and others},
  journal={arXiv preprint arXiv:2310.00426},
  year={2023}
}

@inproceedings{chen2024pixart,
  title={Pixart-$\sigma$: Weak-to-strong training of diffusion transformer for 4k text-to-image generation},
  author={Chen, Junsong and Ge, Chongjian and Xie, Enze and Wu, Yue and Yao, Lewei and Ren, Xiaozhe and Wang, Zhongdao and Luo, Ping and Lu, Huchuan and Li, Zhenguo},
  booktitle={European Conference on Computer Vision},
  pages={74--91},
  year={2024},
  organization={Springer}
}

@article{liu2024sora,
  title={Sora: A review on background, technology, limitations, and opportunities of large vision models},
  author={Liu, Yixin and Zhang, Kai and Li, Yuan and Yan, Zhiling and Gao, Chujie and Chen, Ruoxi and Yuan, Zhengqing and Huang, Yue and Sun, Hanchi and Gao, Jianfeng and others},
  journal={arXiv preprint arXiv:2402.17177},
  year={2024}
}

@article{salimans2022progressive,
  title={Progressive distillation for fast sampling of diffusion models},
  author={Salimans, Tim and Ho, Jonathan},
  journal={arXiv preprint arXiv:2202.00512},
  year={2022}
}

@inproceedings{meng2023distillation,
  title={On distillation of guided diffusion models},
  author={Meng, Chenlin and Rombach, Robin and Gao, Ruiqi and Kingma, Diederik and Ermon, Stefano and Ho, Jonathan and Salimans, Tim},
  booktitle={Proceedings of the IEEE/CVF conference on computer vision and pattern recognition},
  pages={14297--14306},
  year={2023}
}

@inproceedings{shang2023post,
  title={Post-training quantization on diffusion models},
  author={Shang, Yuzhang and Yuan, Zhihang and Xie, Bin and Wu, Bingzhe and Yan, Yan},
  booktitle={Proceedings of the IEEE/CVF conference on computer vision and pattern recognition},
  pages={1972--1981},
  year={2023}
}

@inproceedings{li2023q,
  title={Q-diffusion: Quantizing diffusion models},
  author={Li, Xiuyu and Liu, Yijiang and Lian, Long and Yang, Huanrui and Dong, Zhen and Kang, Daniel and Zhang, Shanghang and Keutzer, Kurt},
  booktitle={Proceedings of the IEEE/CVF International Conference on Computer Vision},
  pages={17535--17545},
  year={2023}
}

@article{liu2025survey,
  title={A survey on cache methods in diffusion models: Toward efficient multi-modal generation},
  author={Liu, Jiacheng and Wang, Xinyu and Lin, Yuqi and Wang, Zhikai and Wang, Peiru and Cai, Peiliang and Zhou, Qinming and Yan, Zhengan and Yan, Zexuan and Shi, Zhengyi and others},
  journal={arXiv preprint arXiv:2510.19755},
  year={2025}
}

@article{chen2023videocrafter1,
  title={Videocrafter1: Open diffusion models for high-quality video generation},
  author={Chen, Haoxin and Xia, Menghan and He, Yingqing and Zhang, Yong and Cun, Xiaodong and Yang, Shaoshu and Xing, Jinbo and Liu, Yaofang and Chen, Qifeng and Wang, Xintao and others},
  journal={arXiv preprint arXiv:2310.19512},
  year={2023}
}

@inproceedings{chen2024videocrafter2,
  title={Videocrafter2: Overcoming data limitations for high-quality video diffusion models},
  author={Chen, Haoxin and Zhang, Yong and Cun, Xiaodong and Xia, Menghan and Wang, Xintao and Weng, Chao and Shan, Ying},
  booktitle={Proceedings of the IEEE/CVF conference on computer vision and pattern recognition},
  pages={7310--7320},
  year={2024}
}

@article{zhao2024real,
  title={Real-time video generation with pyramid attention broadcast},
  author={Zhao, Xuanlei and Jin, Xiaolong and Wang, Kai and You, Yang},
  journal={arXiv preprint arXiv:2408.12588},
  year={2024}
}

@article{croitoru2023diffusion,
  title={Diffusion models in vision: A survey},
  author={Croitoru, Florinel-Alin and Hondru, Vlad and Ionescu, Radu Tudor and Shah, Mubarak},
  journal={IEEE transactions on pattern analysis and machine intelligence},
  volume={45},
  number={9},
  pages={10850--10869},
  year={2023},
  publisher={Ieee}
}

@article{yang2023diffusion,
  title={Diffusion models: A comprehensive survey of methods and applications},
  author={Yang, Ling and Zhang, Zhilong and Song, Yang and Hong, Shenda and Xu, Runsheng and Zhao, Yue and Zhang, Wentao and Cui, Bin and Yang, Ming-Hsuan},
  journal={ACM computing surveys},
  volume={56},
  number={4},
  pages={1--39},
  year={2023},
  publisher={ACM New York, NY, USA}
}

@article{song2020score,
  title={Score-based generative modeling through stochastic differential equations},
  author={Song, Yang and Sohl-Dickstein, Jascha and Kingma, Diederik P and Kumar, Abhishek and Ermon, Stefano and Poole, Ben},
  journal={arXiv preprint arXiv:2011.13456},
  year={2020}
}

@book{butcher2016numerical,
  title={Numerical methods for ordinary differential equations},
  author={Butcher, John Charles},
  year={2016},
  publisher={John Wiley \& Sons}
}

@book{hairer1993solving,
  title={Solving ordinary differential equations I: Nonstiff problems},
  author={Hairer, Ernst and Wanner, Gerhard and N{\o}rsett, Syvert P},
  year={1993},
  publisher={Springer}
}

@article{lipman2022flow,
  title={Flow matching for generative modeling},
  author={Lipman, Yaron and Chen, Ricky TQ and Ben-Hamu, Heli and Nickel, Maximilian and Le, Matt},
  journal={arXiv preprint arXiv:2210.02747},
  year={2022}
}

@inproceedings{lu2022dpm,
  title={Dpm-solver: A fast ode solver for diffusion probabilistic model sampling in around 10 steps},
  author={Lu, Cheng and Zhou, Yuhao and Bao, Fan and Chen, Jianfei and Li, Chongxuan and Zhu, Jun},
  journal={Advances in neural information processing systems},
  volume={35},
  pages={5775--5787},
  year={2022}
}

@article{lu2025dpm,
  title={Dpm-solver++: Fast solver for guided sampling of diffusion probabilistic models},
  author={Lu, Cheng and Zhou, Yuhao and Bao, Fan and Chen, Jianfei and Li, Chongxuan and Zhu, Jun},
  journal={Machine Intelligence Research},
  volume={22},
  number={4},
  pages={730--751},
  year={2025},
  publisher={Springer}
}

@article{liu2022pseudo,
  title={Pseudo numerical methods for diffusion models on manifolds},
  author={Liu, Luping and Ren, Yi and Lin, Zhijie and Zhao, Zhou},
  journal={arXiv preprint arXiv:2202.09778},
  year={2022}
}

@inproceedings{karras2022elucidating,
  title={Elucidating the design space of diffusion-based generative models},
  author={Karras, Tero and Aittala, Miika and Aila, Timo and Laine, Samuli},
  booktitle={Advances in neural information processing systems},
  volume={35},
  pages={26565--26577},
  year={2022}
}

@inproceedings{karras2024analyzing,
  title={Analyzing and improving the training dynamics of diffusion models},
  author={Karras, Tero and Aittala, Miika and Lehtinen, Jaakko and Hellsten, Janne and Aila, Timo and Laine, Samuli},
  booktitle={Proceedings of the IEEE/CVF conference on computer vision and pattern recognition},
  pages={24174--24184},
  year={2024}
}

@inproceedings{ma2024learningtocache,
  title={Learning-to-cache: Accelerating diffusion transformer via layer caching},
  author={Ma, Xinyin and Fang, Gongfan and Bi Mi, Michael and Wang, Xinchao},
  booktitle={Advances in Neural Information Processing Systems},
  volume={37},
  pages={133282--133304},
  year={2024}
}

@inproceedings{esser2024scaling,
  title={Scaling rectified flow transformers for high-resolution image synthesis},
  author={Esser, Patrick and Kulal, Sumith and Blattmann, Andreas and Entezari, Rahim and M{\"u}ller, Jonas and Saini, Harry and Levi, Yam and Lorenz, Dominik and Sauer, Axel and Boesel, Frederic and others},
  booktitle={Forty-first international conference on machine learning},
  year={2024}
}
\clearpage
\appendix
\section{Adams Method Coefficients}
\label{app:coefficients}

\subsection{Derivation of Second-Order Adams-Bashforth (AB2)}

The Adams-Bashforth method approximates the integral:
\begin{equation*}
\int_{t_n}^{t_{n+1}} f(x(s), s) \, ds,
\end{equation*}
by constructing a Lagrange interpolating polynomial through historical function values. For AB2, we use the points $(t_n, f_n)$ and $(t_{n-1}, f_{n-1})$.

The Lagrange interpolating polynomial is:
\begin{equation}
P(s) = f_n \cdot \frac{s - t_{n-1}}{t_n - t_{n-1}} + f_{n-1} \cdot \frac{s - t_n}{t_{n-1} - t_n} = f_n \cdot \frac{s - t_{n-1}}{h} - f_{n-1} \cdot \frac{s - t_n}{h},
\end{equation}
where $h = \Delta t$ is the uniform step size. Integrating from $t_n$ to $t_{n+1}$:
\begin{align}
\int_{t_n}^{t_{n+1}} P(s) \, ds &= \frac{f_n}{h} \int_{t_n}^{t_{n+1}} (s - t_{n-1}) \, ds - \frac{f_{n-1}}{h} \int_{t_n}^{t_{n+1}} (s - t_n) \, ds \\
&= \frac{f_n}{h} \left[ \frac{(s - t_{n-1})^2}{2} \right]_{t_n}^{t_{n+1}} - \frac{f_{n-1}}{h} \left[ \frac{(s - t_n)^2}{2} \right]_{t_n}^{t_{n+1}} \\
&= \frac{f_n}{h} \cdot \frac{(2h)^2 - h^2}{2} - \frac{f_{n-1}}{h} \cdot \frac{h^2}{2} = \frac{3h}{2} f_n - \frac{h}{2} f_{n-1}.
\end{align}
Thus, the AB2 formula is:
\begin{equation}
x_{n+1} = x_n + \frac{h}{2}(3f_n - f_{n-1}),
\end{equation}
with coefficients $\beta_0 = \frac{3}{2}$ and $\beta_1 = -\frac{1}{2}$.

\subsection{Derivation of Second-Order Adams-Moulton (AM2)}

The Adams-Moulton method includes the future value $f_{n+1}$ in the interpolation, making it implicit. For the second-order case (AM2), we construct a polynomial through three points $(t_{n+1}, f_{n+1})$, $(t_n, f_n)$, and $(t_{n-1}, f_{n-1})$:
\begin{equation}
P(s) = f_{n+1} L_{-1}(s) + f_n L_0(s) + f_{n-1} L_1(s),
\end{equation}
where $L_j(s)$ are the Lagrange basis polynomials. After computing the integrals:
\begin{align}
  &\int_{t_n}^{t_{n+1}} L_{-1}(s) \, ds = \frac{5h}{12}, \\
  &\int_{t_n}^{t_{n+1}} L_0(s) \, ds = \frac{8h}{12}, \\
  &\int_{t_n}^{t_{n+1}} L_1(s) \, ds = -\frac{h}{12}.
\end{align}
Thus, the AM2 formula is:
\begin{equation}
x_{n+1} = x_n + \frac{h}{12}(5f_{n+1} + 8f_n - f_{n-1}),
\end{equation}
with coefficients $\gamma_{-1} = \frac{5}{12}$, $\gamma_0 = \frac{8}{12}$, and $\gamma_1 = -\frac{1}{12}$.

\subsection{Higher-Order Coefficients}

Following the same derivation procedure, we obtain coefficients for higher-order methods. \cref{tab:ab_coeff} and \cref{tab:am_coeff} summarize the coefficients and local truncation errors (LTE) for orders 1--4.

\begin{table}[h]
\centering
\caption{Adams-Bashforth coefficients and local truncation errors.}
\label{tab:ab_coeff}
\begin{tabular}{c|cccc|c}
\toprule
Order & $\beta_0$ & $\beta_1$ & $\beta_2$ & $\beta_3$ & LTE \\
\midrule
AB1 & $1$ & -- & -- & -- & $\frac{1}{2}h^2 f''$ \\
AB2 & $\frac{3}{2}$ & $-\frac{1}{2}$ & -- & -- & $\frac{5}{12}h^3 f'''$ \\
AB3 & $\frac{23}{12}$ & $-\frac{16}{12}$ & $\frac{5}{12}$ & -- & $\frac{9}{24}h^4 f^{(4)}$ \\
AB4 & $\frac{55}{24}$ & $-\frac{59}{24}$ & $\frac{37}{24}$ & $-\frac{9}{24}$ & $\frac{251}{720}h^5 f^{(5)}$ \\
\bottomrule
\end{tabular}
\end{table}

\begin{table}[h]
\centering
\caption{Adams-Moulton coefficients and local truncation errors.}
\label{tab:am_coeff}
\begin{tabular}{c|ccccc|c}
\toprule
Order & $\gamma_{-1}$ & $\gamma_0$ & $\gamma_1$ & $\gamma_2$ & $\gamma_3$ & LTE \\
\midrule
AM1 & $1$ & -- & -- & -- & -- & $-\frac{1}{2}h^2 f''$ \\
AM2 & $\frac{5}{12}$ & $\frac{8}{12}$ & $-\frac{1}{12}$ & -- & -- & $-\frac{1}{24}h^3 f'''$ \\
AM3 & $\frac{9}{24}$ & $\frac{19}{24}$ & $-\frac{5}{24}$ & $\frac{1}{24}$ & -- & $-\frac{19}{720}h^4 f^{(4)}$ \\
AM4 & $\frac{251}{720}$ & $\frac{646}{720}$ & $-\frac{264}{720}$ & $\frac{106}{720}$ & $-\frac{19}{720}$ & $-\frac{3}{160}h^5 f^{(5)}$ \\
\bottomrule
\end{tabular}
\end{table}

Note that the AM method of a given order has a smaller LTE coefficient than the AB method of the same order, providing improved accuracy at the cost of requiring $f_{n+1}$.

\section{Error Analysis}
\label{app:error}

\subsection{Discretization Error}

For an ODE $\dot{x} = f(x, t)$ with exact solution $x(t)$, the local truncation error (LTE) of a numerical method measures the error introduced in a single step assuming exact previous values.

\begin{proposition}[AB Local Truncation Error]
The Adams-Bashforth method of order $k$ has LTE:
\begin{equation}
\tau_n^{AB} = C_k h^{k+1} f^{(k)}(\xi) + \mathcal{O}(h^{k+2}),
\end{equation}
where $C_k$ is a constant depending on $k$, and $\xi \in [t_n, t_{n+1}]$.
\end{proposition}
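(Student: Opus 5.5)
Writing $F(t) := f(x(t),t)$ along the exact solution, I will prove the statement by reducing the LTE to the error of polynomial interpolation. I use the uniform step $h$ and nodes $t_{n-j}=t_n-jh$ as in the derivation of AB2 in \cref{app:coefficients}, and I assume $F\in C^{k+1}$. The notation $f^{(k)}$ in the statement is read as $\frac{d^{k}}{dt^{k}}F = x^{(k+1)}$, which matches the convention in \cref{tab:ab_coeff}.

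\textbf{Step 1: rewrite the LTE as an integral of the interpolation error.} Inserting exact values into \cref{eq:ab}, the local error is
\begin{equation*}
\tau_n^{AB} = x(t_{n+1}) - x(t_n) - h\sum_{j=0}^{k-1}\beta_j F(t_{n-j}).
\end{equation*}
By the integral form \cref{eq:integral}, $x(t_{n+1})-x(t_n)=\int_{t_n}^{t_{n+1}}F(s)\,ds$. By the construction in \cref{app:coefficients}, $h\sum_j\beta_j F(t_{n-j})=\int_{t_n}^{t_{n+1}}P_{k-1}(s)\,ds$, where $P_{k-1}$ is the Lagrange interpolant of degree at most $k-1$ through $(t_{n-j},F(t_{n-j}))$ for $j=0,\dots,k-1$. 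Hence
\begin{equation*}
\tau_n^{AB}=\int_{t_n}^{t_{n+1}}\bigl(F(s)-P_{k-1}(s)\bigr)\,ds .
\end{equation*}

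\textbf{Step 2: apply the interpolation remainder and the mean value theorem.} The Cauchy remainder formula gives
\begin{equation*}
F(s)-P_{k-1}(s)=\frac{F^{(k)}(\eta(s))}{k!}\prod_{j=0}^{k-1}(s-t_{n-j}),
\end{equation*}
with $\eta(s)$ in the hull of the nodes and $s$. For $s\in[t_n,t_{n+1}]$ every factor $s-t_{n-j}\ge 0$, so the weight $w(s)=\prod_j(s-t_{n-j})$ does not change sign. Since $s\mapsto F^{(k)}(\eta(s))$ equals the continuous divided difference $k!\,F[t_n,\dots,t_{n-k+1},s]$, the integral mean value theorem applies and gives
\begin{equation*}
\tau_n^{AB}=\frac{F^{(k)}(\xi)}{k!}\int_{t_n}^{t_{n+1}}w(s)\,ds .
\end{equation*}
Substituting $s=t_n+uh$ turns the integral into $h^{k+1}\int_0^1 u(u+1)\cdots(u+k-1)\,du$. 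This yields the statement with
\begin{equation*}
C_k=\frac{1}{k!}\int_0^1\prod_{j=0}^{k-1}(u+j)\,du,
\end{equation*}
and a quick check recovers $\tfrac12,\tfrac{5}{12},\tfrac{9}{24},\tfrac{251}{720}$ from \cref{tab:ab_coeff}.

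\textbf{Main obstacle: where $\xi$ lies.} The mean value theorem only places $\xi$ in $[t_{n-k+1},t_{n+1}]$, whereas the statement asks for $\xi\in[t_n,t_{n+1}]$. I would handle this with a Taylor expansion of $F^{(k)}$ about any point of $[t_n,t_{n+1}]$. Since $|\xi-t_n|\le kh$, moving $\xi$ into $[t_n,t_{n+1}]$ changes $F^{(k)}(\xi)$ by $\mathcal{O}(h)\|F^{(k+1)}\|_\infty$. Multiplied by $h^{k+1}$, this change is absorbed into the $\mathcal{O}(h^{k+2})$ term, which is exactly why the statement carries that remainder.

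Two further remarks on scope. For non-uniform steps the same argument holds with $C_k$ depending on the step ratios. For the actual caching use, where the history values $f_{n-j}$ are themselves predicted, the perturbations enter linearly through the bounded $\beta_j$; that effect belongs to the drift analysis rather than to this LTE bound.
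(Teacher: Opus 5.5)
Your proposal is correct and follows the same route as the paper's sketch: you express the LTE as the integral over $[t_n,t_{n+1}]$ of the Lagrange interpolation remainder, substitute $s=t_n+\sigma h$, and pull out $f^{(k)}(\xi)$ to obtain $C_k=\frac{1}{k!}\int_0^1\prod_{j=0}^{k-1}(\sigma+j)\,d\sigma$. You add care the paper's sketch omits: you justify the weighted mean value step via the sign-constant weight and the continuity of the divided difference, and you note that this only places $\xi$ in $[t_{n-k+1},t_{n+1}]$, so the $\mathcal{O}(h^{k+2})$ term is what absorbs moving $\xi$ into $[t_n,t_{n+1}]$.
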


\begin{proof}[Proof sketch]
The AB method of order $k$ approximates the integral:
\begin{equation*}
\int_{t_n}^{t_{n+1}} f(s) \, ds,
\end{equation*}
by interpolating $f$ at $t_n, t_{n-1}, \ldots, t_{n-k+1}$. Let $P_{k-1}$ denote the resulting degree-$(k-1)$ polynomial. The interpolation error is:
\begin{equation}
f(s) - P_{k-1}(s) = \frac{f^{(k)}(\xi_s)}{k!} \prod_{j=0}^{k-1}(s - t_{n-j}),
\end{equation}
for some $\xi_s$ between the extremes of $\{s, t_n, \ldots, t_{n-k+1}\}$. Integrating the remainder over $[t_n, t_{n+1}]$:
\begin{equation}
\tau_n = \int_{t_n}^{t_{n+1}} \frac{f^{(k)}(\xi_s)}{k!} \prod_{j=0}^{k-1}(s - t_{n-j}) \, ds.
\end{equation}
For uniform step size $h$, substitute $s = t_n + \sigma h$ with $\sigma \in [0, 1]$. Since each factor satisfies $(s - t_{n-j}) = (\sigma + j)h$, we obtain:
\begin{align}
\tau_n &= \frac{h^{k+1} f^{(k)}(\xi)}{k!} \int_0^1 \prod_{j=0}^{k-1}(\sigma + j) \, d\sigma \\
&= C_k h^{k+1} f^{(k)}(\xi),
\end{align}
where:
\begin{equation*}
C_k = \frac{1}{k!}\int_0^1 \prod_{j=0}^{k-1}(\sigma + j) \, d\sigma.
\end{equation*}
For example, AB1/Euler gives $C_1 = 1/2$. AB2 gives $C_2 = 5/12$.
\end{proof}

The global error after $N$ steps is bounded by:
\begin{equation}
\|x_N - x(t_N)\| \leq \frac{e^{LT} - 1}{L} \max_n \|\tau_n\| = \mathcal{O}(h^k),
\end{equation}
where $L$ is the Lipschitz constant of $f$ and $T$ is the total integration time.

\subsection{Prediction Error During Skips}

When skipping model calls, we use AB to predict $f_{n+1}$ from history:
\begin{equation}
\hat{f}_{n+1} = \sum_{j=0}^{k-1} \alpha_j f_{n-j},
\end{equation}
where $\alpha_j$ are extrapolation coefficients. The prediction error is:
\begin{equation}
\epsilon_{\text{pred}} = \|f_{n+1} - \hat{f}_{n+1}\| \approx \mathcal{O}(h^k \|f^{(k)}\|).
\end{equation}

This error depends on the local curvature of the trajectory. Our dynamics metric $\delta_n$ approximates the first derivative, providing a proxy for curvature.

\subsection{Error Accumulation and Mitigation}

When $J$ consecutive steps are skipped, errors accumulate:
\begin{equation}
\epsilon_{\text{acc}}(J) \approx \sum_{i=1}^{J} \epsilon_{\text{pred}}(i) \cdot (1 + Lh)^{J-i}.
\end{equation}

To see this, note that at each skipped step $i$, the prediction error $\epsilon_{\text{pred}}(i)$ is introduced into the latent. Since the ODE has Lipschitz constant $L$, any perturbation at step $i$ is amplified by at most a factor $(1+Lh)$ per subsequent step. After $J-i$ additional steps, the error from step $i$ has grown by $(1+Lh)^{J-i}$. Summing over all skipped steps yields the geometric series above. For small $Lh$, this simplifies to $\epsilon_{\text{acc}}(J) \approx J \cdot \epsilon_{\text{pred}} \cdot e^{LJh}$, showing that the accumulated error grows roughly exponentially with the skip interval $J$, motivating the adaptive control provided by DSM.

The exponential factor $(1 + Lh)^{J-i}$ reflects error propagation through the ODE. Our framework mitigates this through:

\begin{enumerate}
   \item \textbf{Adaptive skip interval}: When $\delta_n$ is large, $J$ is automatically reduced via \cref{eq:jump}.
   \item \textbf{ABM correction}: When $\delta_n \geq \tau$, we invoke ABM, which evaluates the true $f_{n+1}$ and resets the prediction error to the discretization error of AM.
   \item \textbf{Threshold-based switching}: The correction ratio $r$ creates a buffer zone where ABM is used even for moderate dynamics.
\end{enumerate}

\begin{proposition}[Error Bound with Dynamic Step Modulation]
Under the assumption that $\|f^{(k)}\| \leq M$ and with threshold $\tau$ chosen such that $\delta_n < \tau$ implies $\|f'\| \leq \tau \|f\|$, the accumulated error over a trajectory satisfies:
\begin{equation}
\|\epsilon_{\text{total}}\| \leq C_1 h^k + C_2 \tau J_{\max} + C_3 N_{\text{ABM}},
\end{equation}
where $J_{\max}$ is the maximum allowed skip interval and $N_{\text{ABM}}$ is the number of ABM corrections that reset the accumulated drift.
\end{proposition}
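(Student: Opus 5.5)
We organize the proof as a decomposition of the global error into three additive contributions, one per term of the bound, and control each by results already established in this appendix. The time grid $t_0,\dots,t_N$ splits into maximal blocks. Each block starts at a step where the model is actually evaluated (an AB step with a fresh $f_n$, or an ABM step) and is followed by at most $J_{\max}$ skipped steps, on which $f$ is replaced by the extrapolation $\hat f$. We write the numerical error as $e_n = x_n - x(t_n)$ and follow the standard Lady Windermere's fan argument: the global error is the sum of local defects, each propagated to $t_N$ through the Lipschitz flow with amplification at most $e^{LT}$.

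The three kinds of defect are handled separately.

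\textbf{Evaluated steps.} At steps where $f$ is evaluated, the defect is the AB (or AM) local truncation error. By the AB Local Truncation Error proposition and $\|f^{(k)}\|\le M$, this defect is at most $C_k M h^{k+1}$. Summing over at most $N=T/h$ steps and using the global bound $\frac{e^{LT}-1}{L}\max_n\|\tau_n\|$ gives the $C_1 h^k$ term, with $C_1 = C_k M (e^{LT}-1)/L$.

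\textbf{Skipped steps.} At skipped steps there is an additional defect $h\,\epsilon_{\text{pred}}$. Here the plan is to use the threshold hypothesis rather than the $h^k$ interpolation estimate. Skips only occur when $\delta_n<\tau$, and by assumption $\delta_n<\tau$ implies $\|f'\|\le \tau\|f\|$. Hence $f$ changes by at most $\tau\|f\|\,ih$ over $i$ steps, and the extrapolated value differs from the true one by $O(\tau \|f\|_\infty J_{\max} h)$, up to a constant depending on the extrapolation coefficients $\alpha_j$. Inserting this into the accumulation formula $\epsilon_{\text{acc}}(J)\approx\sum_{i}\epsilon_{\text{pred}}(i)(1+Lh)^{J-i}$ bounds one block's drift by $C\tau J_{\max}\,e^{LJ_{\max}h}$. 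This yields a term of the form $C_2\tau J_{\max}$, provided drift from different blocks does not simply add up.

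\textbf{Resets at ABM steps.} This is where the third term enters. When an ABM correction fires, the accumulated drift is reset: the latent is recomputed with a true evaluation $\tilde f_{n+1}$, so the residual carried forward is bounded by a fixed quantity, namely the AM discretization error plus a bounded fraction of the drift. We then charge a constant $C_3$ per reset. With $N_{\text{ABM}}$ corrections, the total contribution is at most $C_3 N_{\text{ABM}}$. Between resets, the contribution of at most one block's drift is what gives the $C_2\tau J_{\max}$ term.

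The main obstacle is justifying that an ABM step genuinely resets the drift rather than merely damping it. AM is a one-step correction of $x_{n+1}$, so it does not erase the error already present in $x_n$; that error still propagates with factor $(1+Lh)$. We first try to make this rigorous by assuming, as the Error Analysis section asserts, that the corrector returns to within a fixed tolerance of the reference trajectory. This leaves a per-reset residual absorbed into $C_3$, and is why the bound is linear in $N_{\text{ABM}}$ rather than geometric. If that assumption cannot be derived, the fallback is to state it explicitly as a hypothesis. In that case the constants $C_2$ and $C_3$ absorb the factor $e^{LT}$, and the proof reduces to bookkeeping over blocks.
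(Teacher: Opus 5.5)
The paper states this proposition without proof. Its only support is the informal discussion just before it: the accumulation formula and the assertion that an ABM step ``resets the prediction error to the discretization error of AM''. Your three-way decomposition follows that discussion closely, including its weak point. The genuine gap is the reset step, and it is worse than you indicate. It does not follow from the hypotheses, because an AM step only controls the new local defect while $e_n$ is still propagated with factor $(1+Lh)$. It also would not rescue your argument even if you assumed it. In the branch $\delta_n<\tau r$ the method takes a plain AB step, recomputes $J$ and skips again with no corrector. So a smooth stretch of the trajectory can contain $\Theta(N/J_{\max})$ consecutive blocks and no ABM step at all. Your claim that only one block's drift contributes between resets is therefore false for the algorithm as specified. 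Summing your per-block bound $C\tau J_{\max}$ over those blocks gives a bound of order $\tau N$, which is not of the form $C_2\tau J_{\max}$ uniformly in $h$.

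No reset is needed; the problem is the per-block bound. To obtain $C\tau J_{\max}$ you fed $\epsilon_{\text{pred}}$, an error in $f$, into the paper's accumulation formula as if it were an error in the latent, and then absorbed $J_{\max}h\le T$. Both steps discard powers of $h$ that you need. Keep the latent defect as $h\,\epsilon_{\text{pred}}$, as you wrote one sentence earlier. Assume consistent coefficients ($\sum_j\beta_j=1$) and read the derivative hypothesis on the window spanned by the history nodes and the skip block, which has length $O(kJ_{\max}h)$. Then each skipped step injects at most $C\tau\|f\|_\infty J_{\max}h^2$. There are at most $N=T/h$ such steps, so the Gronwall sum from your first paragraph bounds their total effect by $C(T,L,k)\,\|f\|_\infty\,\tau J_{\max}h\le C_2\tau J_{\max}$, over all blocks at once.

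One more step needs repair. After a skip block, Algorithm 1 has appended only computed outputs to $\mathcal{H}$. The next evaluated steps therefore apply uniform-grid AB/AM coefficients to unevenly spaced history. Their defect is not the $C_kMh^{k+1}$ you cite from the AB proposition but $O(kJ_{\max}h^2\|f'\|_\infty)$. When $\delta_n<\tau$ this is $O(\tau J_{\max}h^2)$ and joins the $C_2$ term. When $\delta_n\ge\tau$ there is no factor $\tau$, but that step is an ABM step, so charging it a bounded amount produces exactly a $C_3N_{\text{ABM}}$ term. That, not a drift reset, is the role that term can honestly play.
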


This shows that the error is controlled by the interplay between the method order $k$, the threshold $\tau$, and the frequency of corrections.

\section{Algorithm Pseudocode}
\label{app:algorithm}

\begin{algorithm}[h]
\caption{PrediT: Predictive Diffusion Transformer Inference}
\label{alg:predit}
\begin{algorithmic}
\REQUIRE Noise $x_1$, timesteps $\{t_n\}_{n=0}^{N}$, model $f_\theta$, order $k$, threshold $\tau$, ratio $r$
\STATE Initialize history buffer $\mathcal{H} \leftarrow \emptyset$, skip counter $J \leftarrow 0$
\FOR{$n = 0$ to $N-1$}
   \STATE $\Delta t \leftarrow t_{n+1} - t_n$
   \IF{$J > 0$}
       \STATE $x \leftarrow x + \text{AB}_k(\Delta t, \mathcal{H})$  \texttt{// Skip: use cached history}
       \STATE $J \leftarrow J - 1$
   \ELSE
       \STATE $f_n \leftarrow f_\theta(x, t_n)$ \texttt{// Compute model output}
       \STATE $\delta_n \leftarrow \|f_n - f_{n-1}\|_1 / (\|f_n\|_1 + \epsilon)$ \texttt{// Dynamics metric}
       \IF{$\delta_n \geq \tau$}
           \STATE $x \leftarrow \text{ABM}_k(x, \Delta t, f_n, \mathcal{H}, f_\theta)$ \texttt{// Corrected by ABM}
       \ELSIF{$\delta_n \geq \tau \cdot r$}
           \STATE $x \leftarrow \text{ABM}_k(x, \Delta t, f_n, \mathcal{H}, f_\theta)$, $J \leftarrow \text{ComputeSkip}(\delta_n, \tau)$ \texttt{// Corrected by ABM with dynamic step modulation}
       \ELSE
           \STATE $x \leftarrow x + \text{AB}_k(\Delta t, [f_n] \cup \mathcal{H})$, $J \leftarrow \text{ComputeSkip}(\delta_n, \tau)$ \texttt{// Predicted by AB with dynamic step modulation}
       \ENDIF
       \STATE Update $\mathcal{H} \leftarrow [f_n] \cup \mathcal{H}$
   \ENDIF
\ENDFOR
\STATE \textbf{return} $x$
\end{algorithmic}
\end{algorithm}

\begin{table*}[t]
  \centering
  \small
  \caption{Memory Usage (MB) on Different Resolutions and Frames using A800 80GB.}
  \label{tab:memory_usage}
  \begin{tabular}{c|ccccc}
  \toprule
  \textbf{Method} & \textbf{480p$\times$640p$\times$45f} & \textbf{480p$\times$640p$\times$65f} & \textbf{544p$\times$960p$\times$17f} & \textbf{544p$\times$960p$\times$65f} & \textbf{720p$\times$1280p$\times$129f} \\ \midrule
  Original     & 44295 & 45949 & 43131 & 49477 & 74465 \\
  PrediT       & 44819 & 46313 & 44043 & 49715 & 74525 \\
  PAB          & 79831 & OOM   & 55045 & OOM   & OOM \\
  TeaCache     & 48619 & 51499 & 48227 & 53319 & 80083 \\
  ProfilingDiT & OOM   & OOM   & 57237 & OOM   & OOM \\
  TaylorSeer   & OOM   & OOM   & 61557 & OOM   & OOM \\ \bottomrule
  \end{tabular}
 \end{table*}

 \begin{table*}[t]
  \centering
  \small
  \caption{Ablation Study with Different Configurations on ImageNet with DiT-XL/2.}
  \label{tab:ablation_study}
  \begin{tabular}{c|cc|ccc}
  \toprule
  \textbf{Settings} & \textbf{Latency(s) $\downarrow$} & \textbf{Speedup $\uparrow$} & \textbf{FID$\downarrow$} & \textbf{sFID$\downarrow$} & \textbf{IS$\uparrow$} \\ \midrule
  \textbf{DDIM-50} & 0.569 & 1.00$\times$ & 2.28 & 4.25 & 242.3 \\
  \textbf{($\mathcal{O}=2$, $\tau=1.25$, $r=0.3$)} & 0.219 & 2.60$\times$ & 2.97 & 5.33 & 222.6 \\
  \textbf{($\mathcal{O}=2$, $\tau=1.25$, $r=0.1$)} & 0.250 & 2.28$\times$ & 3.01 & 5.24 & 222.5 \\
  \textbf{($\mathcal{O}=2$, $\tau=1.00$, $r=0.3$)} & 0.251 & 2.27$\times$ & 2.71 & 5.31 & 228.4 \\
  \textbf{($\mathcal{O}=2$, $\tau=1.00$, $r=0.2$)} & 0.297 & 1.92$\times$ & 2.72 & 5.28 & 228.3 \\
  \textbf{($\mathcal{O}=2$, $\tau=1.00$, $r=0.1$)} & 0.293 & 1.94$\times$ & 2.78 & 5.30 & 227.7 \\
  \textbf{($\mathcal{O}=2$, $\tau=0.75$, $r=0.3$)} & 0.291 & 1.96$\times$ & 2.44 & 4.84 & 235.0 \\
  \textbf{($\mathcal{O}=2$, $\tau=0.75$, $r=0.1$)} & 0.306 & 1.86$\times$ & 2.48 & 4.88 & 234.3 \\ \midrule
  \textbf{($\mathcal{O}=3$, $\tau=1.25$, $r=0.3$)} & 0.228 & 2.50$\times$ & 2.54 & 5.12 & 231.1 \\
  \textbf{($\mathcal{O}=3$, $\tau=1.25$, $r=0.1$)} & 0.259 & 2.20$\times$ & 2.49 & 4.85 & 232.4 \\
  \textbf{($\mathcal{O}=3$, $\tau=1.00$, $r=0.3$)} & 0.254 & 2.24$\times$ & 2.39 & 4.89 & 235.8 \\
  \textbf{($\mathcal{O}=3$, $\tau=1.00$, $r=0.2$)} & 0.286 & 1.99$\times$ & 2.39 & 4.89 & 235.8 \\
  \textbf{($\mathcal{O}=3$, $\tau=1.00$, $r=0.1$)} & 0.287 & 1.98$\times$ & 2.37 & 4.77 & 237.8 \\
  \textbf{($\mathcal{O}=3$, $\tau=0.75$, $r=0.3$)} & 0.295 & 1.93$\times$ & 2.25 & 4.70 & 239.7 \\
  \textbf{($\mathcal{O}=3$, $\tau=0.75$, $r=0.1$)} & 0.312 & 1.82$\times$ & 2.25 & 4.65 & 239.9 \\ \midrule
  \textbf{($\mathcal{O}=4$, $\tau=1.25$, $r=0.3$)} & 0.240 & 2.37$\times$ & 2.41 & 5.12 & 234.7 \\
  \textbf{($\mathcal{O}=4$, $\tau=1.25$, $r=0.1$)} & 0.269 & 2.12$\times$ & 2.33 & 4.82 & 236.8 \\
  \textbf{($\mathcal{O}=4$, $\tau=1.00$, $r=0.3$)} & 0.268 & 2.12$\times$ & 2.24 & 4.78 & 239.5 \\
  \textbf{($\mathcal{O}=4$, $\tau=1.00$, $r=0.2$)} & 0.302 & 1.88$\times$ & 2.24 & 4.75 & 239.8 \\
  \textbf{($\mathcal{O}=4$, $\tau=1.00$, $r=0.1$)} & 0.309 & 1.84$\times$ & 2.23 & 4.60 & 240.1 \\
  \textbf{($\mathcal{O}=4$, $\tau=0.75$, $r=0.3$)} & 0.305 & 1.87$\times$ & 2.17 & 4.61 & 242.7 \\
  \textbf{($\mathcal{O}=4$, $\tau=0.75$, $r=0.1$)} & 0.324 & 1.76$\times$ & 2.16 & 4.49 & 243.3 \\ \bottomrule
  \end{tabular}
  \end{table*}
\section{Evaluation Metrics}
\label{app:metrics}

\subsection{Text-to-Image Metrics}

\textbf{ImageReward.} ImageReward measures the alignment between generated images and human preferences. It is trained on human preference data to predict which image better matches the given text prompt:
\begin{equation}
\text{ImageReward}(x, c) = f_\phi(x, c),
\end{equation}
where $x$ is the generated image, $c$ is the text prompt, and $f_\phi$ is the reward model trained on human annotations.

\textbf{CLIP Score.} CLIP Score evaluates the semantic consistency between the generated image and the input text prompt using a pretrained CLIP model:
\begin{equation}
\text{CLIPScore}(x, c) = \max(100 \cdot \cos(E_I(x), E_T(c)), 0),
\end{equation}
where $E_I$ and $E_T$ are the image and text encoders of CLIP, and $\cos(\cdot, \cdot)$ denotes cosine similarity.

\textbf{Aesthetic Score.} Aesthetic Score predicts the visual appeal of generated images using a model trained on human aesthetic ratings:
\begin{equation}
\text{AestheticScore}(x) = g_\psi(E_I(x)),
\end{equation}
where $g_\psi$ is a linear regressor trained on the Aesthetic Visual Analysis (AVA) dataset aesthetic annotations.

\subsection{Text-to-Video Metrics}

\textbf{VBench.} VBench is a comprehensive benchmark that evaluates video generation across multiple dimensions, including subject consistency, background consistency, motion smoothness, aesthetic quality, and imaging quality. The overall score is a weighted combination of these sub-metrics.

\textbf{LPIPS.} Learned Perceptual Image Patch Similarity measures perceptual similarity between frames:
\begin{equation}
\text{LPIPS}(x, y) = \sum_l \frac{1}{H_l W_l} \sum_{h,w} \|w_l \odot (\hat{\phi}_l^{hw}(x) - \hat{\phi}_l^{hw}(y))\|_2^2,
\end{equation}
where $\hat{\phi}_l$ are normalized activations from layer $l$ of a pretrained network and $w_l$ are learned weights.

\textbf{SSIM.} Structural Similarity Index measures the structural similarity between two images:
\begin{equation}
\text{SSIM}(x, y) = \frac{(2\mu_x\mu_y + c_1)(2\sigma_{xy} + c_2)}{(\mu_x^2 + \mu_y^2 + c_1)(\sigma_x^2 + \sigma_y^2 + c_2)},
\end{equation}
where $\mu_x, \mu_y$ are local means, $\sigma_x, \sigma_y$ are standard deviations, $\sigma_{xy}$ is the covariance, and $c_1, c_2$ are stability constants.

\textbf{PSNR.} Peak Signal-to-Noise Ratio measures the reconstruction quality:
\begin{equation}
\text{PSNR}(x, y) = 10 \cdot \log_{10}\left(\frac{\text{R}^2}{\text{MSE}(x, y)}\right),
\end{equation}
where $R$ is the maximum possible pixel value, and MSE is the mean squared error.

\subsection{Class-to-Image Metrics}

\textbf{FID.} Fr\'echet Inception Distance measures the similarity between the distributions of generated and real images:
\begin{equation}
\text{FID} = \|\mu_r - \mu_g\|^2 + \text{Tr}(\Sigma_r + \Sigma_g - 2(\Sigma_r \Sigma_g)^{1/2}),
\end{equation}
where $(\mu_r, \Sigma_r)$ and $(\mu_g, \Sigma_g)$ are the mean and covariance of Inception features for real and generated images.

\textbf{sFID.} Spatial FID computes FID on spatial feature maps rather than pooled features, better capturing spatial structure.

\textbf{Inception Score.} Inception Score evaluates both quality and diversity of generated samples:
\begin{equation}
\text{IS} = \exp\left(\mathbb{E}_x [D_{KL}(p(y|x) \| p(y))]\right),
\end{equation}
where $p(y|x)$ is the conditional class distribution from an Inception classifier and $p(y) = \mathbb{E}_x[p(y|x)]$ is the marginal distribution. Higher IS indicates both confident class predictions and diverse class coverage.

\section{Text-to-Image Prompts}
\label{app:prompts}

The prompts used for the visual comparisons in \cref{fig:flux} are listed below.

\begin{enumerate}
   \item \textbf{Column 1:} [``rat with bandages on bandaged arms holding wakizashi, highly detailed, d \& d, fantasy, highly detailed, digital painting, trending on artstation, concept art, sharp focus, illustration, global illumination, shaded, art by artgerm and greg rutkowski and fuji choko and viktoria gavrilenko and hoang lap"]
   \item \textbf{Column 2:} [``anna from frozen | | fine detail!! anime!! realistic shaded lighting!!, directional lighting 2 d poster by ilya kuvshinov, magali villeneuve, artgerm, jeremy lipkin and michael garmash and rob rey"]
   \item \textbf{Column 3:} [``a boy holding on to a dying old dog connecting him to his childhood"]
   \item \textbf{Column 4:} [``a beautiful ultradetailed anime illustration of unknown backroom level nature by bjarke ingels, landscape sunlight laser studio ghibli infrared cyberpunk at night synthwave tokyo vaporwave neon noir magic realism, archdaily, wallpaper, highly detailed, trending on artstation."]
   \item \textbf{Column 5:} [``the moon as a lightbulb being unscrewed by someones hand"]
\end{enumerate}

\section{More Visual Results}
\label{app:more_visual}

\subsection{More Visual Comparisons on FLUX.1}

\cref{fig:flux_more} presents additional visual comparisons on FLUX.1. While reuse-based methods exhibit visible degradation in fine details and prediction-based methods show subtle artifacts, PrediT preserves sharp details and accurate color reproduction across diverse prompts, validating the effectiveness of our approach.

\begin{figure*}[t]
 \centering
 \includegraphics[width=0.96\linewidth,height=0.74\textheight,keepaspectratio]{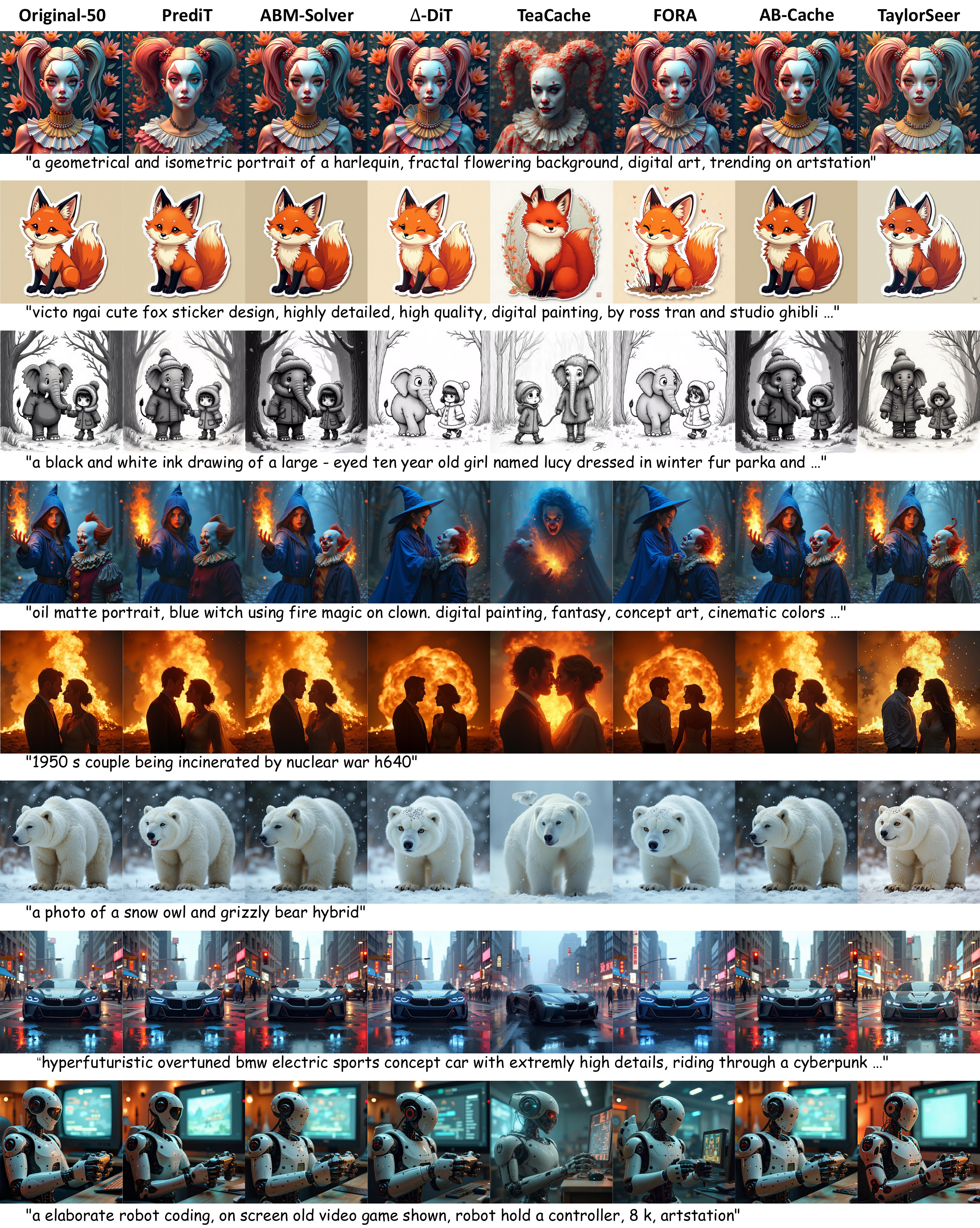}
 \caption{More Visual Comparisons on FLUX.1. Zoom in for best viewing.}
 \Description{A supplementary comparison grid of FLUX text-to-image results across additional prompts, showing the original model, baseline accelerators, and PrediT.}
 \label{fig:flux_more}
\end{figure*}

\subsection{More Visual Comparisons on HunyuanVideo}

\cref{fig:hy_more} provides additional visual comparisons on HunyuanVideo. Video generation is more challenging as errors can accumulate across both denoising steps and video frames. Existing methods show blurriness or temporal inconsistencies, while PrediT maintains sharp spatial details and smooth temporal transitions, achieving up to $3.28\times$ speedup while preserving visual quality.

\begin{figure*}[t]
 \centering
 \includegraphics[width=0.96\linewidth,height=0.74\textheight,keepaspectratio]{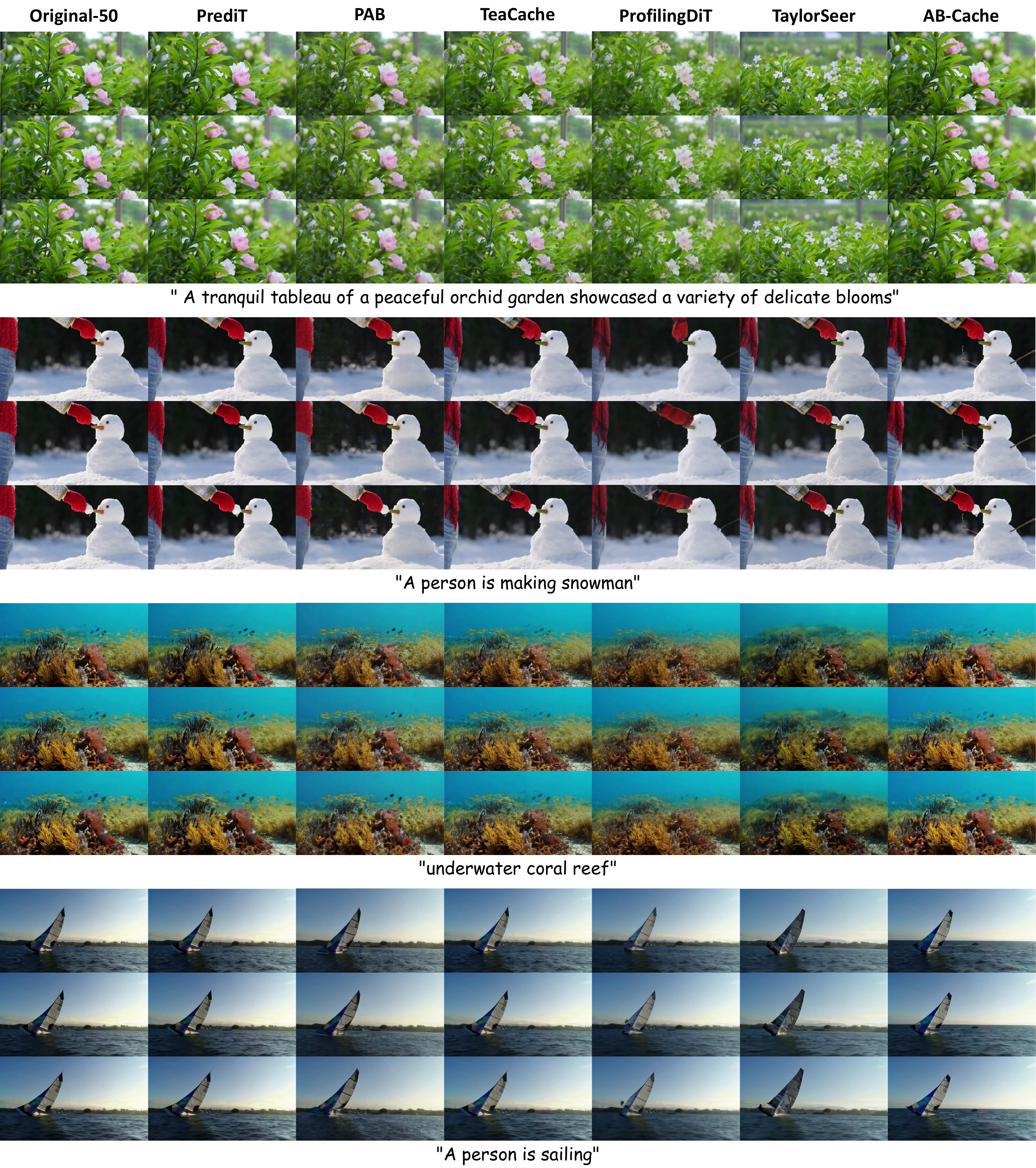}
 \caption{More Visual Comparisons on HunyuanVideo. Zoom in for best viewing.}
 \Description{A supplementary comparison grid of HunyuanVideo frames across additional prompts, contrasting baseline acceleration methods with PrediT.}
 \label{fig:hy_more}
\end{figure*}

\section{Memory Usage Analysis}
\label{app:memory}

\textbf{Theoretical Space Complexity.}
PrediT stores only the $k$ most recent final model outputs $\{f_n, f_{n-1}, \ldots, f_{n-k+1}\}$ for the AB/AM computation, where $k$ is the predictor order (typically $k \leq 4$). Each output has the same shape as a single model forward pass result. Therefore, the additional memory overhead is $\mathcal{O}(k \cdot d)$, where $d$ is the dimensionality of the model output. In contrast, methods like TaylorSeer~\cite{liu2025reusing} store Taylor expansion coefficients \emph{per layer}, requiring $\mathcal{O}(L \cdot d_l)$ additional memory where $L$ is the number of layers and $d_l$ includes intermediate feature dimensions (typically much larger than $d$). PAB~\cite{zhao2024real} caches full attention maps, requiring $\mathcal{O}(N^2)$ memory per cached layer. This fundamental difference in what is cached explains why PrediT introduces only 1--2\% memory overhead while other methods cause out-of-memory errors at high resolutions.

Video generation with DiT models requires substantial GPU memory due to the large spatial-temporal feature maps. We therefore analyze the memory consumption of different acceleration methods on HunyuanVideo across various resolutions and frame settings using a single NVIDIA A800 80GB GPU. As shown in \cref{tab:memory_usage}, PrediT introduces minimal memory overhead compared to the original model, with only 1\%-2\% additional memory usage across all settings. In contrast, prediction-based methods require storing Taylor expansion coefficients and historical gradients, leading to out-of-memory (OOM) errors at most settings. PAB also suffers from OOM at higher resolutions due to its attention caching mechanism. TeaCache shows moderate memory increase but remains functional. These results demonstrate that PrediT achieves significant speedup while maintaining memory efficiency, making it practical for real-world video generation applications.

\section{Supplementary Results for Ablation Studies}
\label{app:ablation_studies}

We provide additional ablation studies on DiT-XL/2 to analyze the impact of different hyperparameter configurations. \cref{tab:ablation_study} shows results with varying predictor order, dynamics threshold, and correction ratio. Higher-order predictors achieve better quality at the cost of reduced speedup, while smaller $\tau$ values lead to more frequent corrections and improved quality. Notably, with $\mathcal{O}=4$ and $\tau=0.75$, PrediT achieves an FID of 2.17, surpassing the baseline and demonstrating that our method can improve quality while accelerating inference.



\end{document}